%% file: root.tex
\documentclass[letterpaper, 10 pt, conference]{ieeeconf}  

\IEEEoverridecommandlockouts                              

\title{\LARGE \bf
Zero-Shot Generalization from Motion Demonstrations to New Tasks
}
\author{Kilian Freitag$^{1*}$, Alvin Combrink$^{1*}$, and Nadia Figueroa$^{2}$
    \thanks{$^{*}$Equal contribution. $^{1}$Department of Electrical Engineering, Chalmers University of Technology, Gothenburg, Sweden {\tt\small \{combrink,kilian.freitag\}@chalmers.se}. $^{2}$GRASP Lab, University of Pennsylvania, PA 19104, USA. This work is supported by the ITEA project ArtWork.}
}

\usepackage{amsmath}
\usepackage{todonotes}
\usepackage{amssymb}
\usepackage{hyperref}
\usepackage{graphicx}
\usepackage{subcaption}
\usepackage{booktabs}
\usepackage{balance}
\usepackage{xspace}
\newtheorem{theorem}{Theorem}[section]
\newtheorem{definition}{Definition}[section]

\begin{document}
\maketitle
\thispagestyle{empty}
\pagestyle{empty}

\input{sections/NewCommands}

\begin{abstract}
    \input{sections/Abstract}
\end{abstract}

\section{Introduction}
\input{sections/Introduction}

\section{Problem Description and Preliminaries}\label{sec:preliminaries}

\input{sections/Problem_Description}

\section{The Gaussian Graph}

\input{sections/GaussianGraph}

\section{Demonstration Stitching}
\input{sections/TimeInvariantStitching}

\section{Demonstration Chaining}\label{sec:chaining}

\input{sections/Chaining}

\section{Results}

\input{sections/Experiments}

\section{Discussions}

\input{sections/Discussions}

\section{Conclusions}

\input{sections/Conclusion}

\bibliographystyle{ieeetr}
\bibliography{Bib}

\end{document}

%% file: sections/NewCommands.tex
\newcommand{\Real}{\mathbb{R}}

\newcommand{\pos}{\xi}
\newcommand{\vel}{\dot{\xi}}

\newcommand{\DemoSet}{\mathbb{D}}
\newcommand{\demo}{\mathcal{D}}

\newcommand{\Params}{\Theta}
\newcommand{\param}{\theta}
\newcommand{\gmean}{\mu}
\newcommand{\gcov}{\mathbf{\Sigma}}
\newcommand{\gprior}{\pi}
\newcommand{\gpost}{\gamma}

\newcommand{\dsA}{\mathbf{A}}
\newcommand{\dsP}{\mathbf{P}}
\newcommand{\dsQ}{\mathbf{Q}}

\newcommand{\Graph}{\mathcal{G}}
\newcommand{\Nodes}{\mathcal{V}}
\newcommand{\Edges}{\mathcal{E}}
\newcommand{\Weights}{\mathit{W}}
\newcommand{\direction}{\psi}
\newcommand{\paramdist}{{\eta_\mathit{dist}}}
\newcommand{\paramdir}{{\eta_\mathit{dir}}}
\newcommand{\parambhat}{{\eta_\mathit{BC}}}
\newcommand{\ggSolution}{\sigma}

\newcommand{\Gaussians}{\mathbf{G}}
\newcommand{\gaussian}{g}

\newcommand{\DSC}{\mathcal{C}}
\newcommand{\DSseq}{\mathcal{F}}
\newcommand{\TriggerSeq}{\Gamma}
\newcommand{\trigger}{\gamma}
\newcommand{\TransSeq}{\mathcal{T}}
\newcommand{\trans}{\tau}
\newcommand{\InterDSseq}{\mathcal{U}}
\newcommand{\inter}{u}

\newcommand{\datasetSmall}{\texttt{2D\,Small}\xspace}
\newcommand{\datasetLarge}{\texttt{2D\,Large}\xspace}
\newcommand{\datasetPcgmm}{\texttt{3D\,PC-GMM}\xspace}

%% file: sections/Abstract.tex
Learning motion policies from expert demonstrations is an essential paradigm in modern robotics.
While end-to-end models aim for broad generalization, they require large datasets and computationally heavy inference. 
Conversely, learning dynamical systems (DS) provides fast, reactive, and provably stable control from very few demonstrations. 
However, existing DS learning methods typically model isolated tasks and struggle to reuse demonstrations for novel behaviors. 
In this work, we formalize the problem of combining isolated demonstrations within a shared workspace to enable generalization to unseen tasks. 
The \emph{Gaussian Graph} is introduced, which reinterprets spatial components of learned motion primitives as discrete vertices with connections to one another. 
This formulation allows us to bridge continuous control with discrete graph search. 
We propose two frameworks leveraging this graph: \emph{Stitching}, for constructing time-invariant DSs, and \emph{Chaining}, giving a sequence-based DS for complex motions while retaining convergence guarantees. 
Simulations and real-robot experiments show that these methods successfully generalize to new tasks where baseline methods fail.

%% file: sections/Introduction.tex
Learning from demonstration is a widely used paradigm in robotics for acquiring control policies from expert trajectories.
Recent work increasingly targets end-to-end foundation models that aim to generalize across tasks and embodiments~\cite{zitkovich2023rt,kim2024openvla}. 
However, such models typically require large datasets and significant compute, which can make fast, reactive control challenging~\cite{kawaharazuka2024real}.

An alternative is to learn a dynamical system (DS) from a small number of demonstrations and use it directly as a motion policy~\cite{billard2022learning,sun2024se}.
In \emph{Linear Parameter Varying DS} (LPV-DS)~\cite{billard2022learning}, a Gaussian Mixture Model (GMM) partitions the demonstrations into regions and smoothly blends locally linear time-invariant (LTI) DSs into a single nonlinear policy, 
enabling stable and convergent behavior toward a target state. 
Despite their data efficiency, LPV-DS-like methods are typically trained per task.
Consequently, they do not generalize to new initial-target pairs that were not demonstrated.

The ability to combine demonstrations of distinct tasks to solve novel, unseen tasks has high practical relevance in modern robotics.
In structured environments such as factories or laboratories, the number of potential tasks scales combinatorially with the number of start and goal configurations.
Collecting expert demonstrations for every possible variation is prohibitively expensive, time-consuming, and fundamentally unscalable. 
However, demonstrations in a shared workspace effectively exhibit \emph{regions of safe and kinematically viable movement} --- knowledge that we aim to repurpose in this work to achieve zero-shot generalization to new tasks.
This approach drastically improves the efficiency and scalability of imitation learning by circumventing the need for exhaustive data collection.
Furthermore, it naturally supports incremental learning; 
introducing a new workstation to a factory floor only requires a brief demonstration linking it to the existing network of learned behaviors. 

Prior work has explored the adjacent problem of adapting learned policies to new target configurations without requiring new data.
For instance, \emph{Elastic Motion Policy}~\cite{li2025elastic} geometrically reshapes the underlying GMM structure to reach new targets, 
while \emph{Task-Parameterized GMMs}~\cite{Calinon2016} warp a single demonstrated skill to accommodate new or moving reference frames.
Other approaches address multi-stage behaviors by segmenting demonstrations into sequential phases and recombining them temporally~\cite{medina2017learning}.
However, these methods primarily focus on adapting or segmenting an isolated task;
they do not treat disjoint demonstrations as a shared, interconnected map of safe, kinematically viable regions from which entirely novel paths can be routed. 

Building on LPV-DS, we reinterpret the Gaussians derived from the GMM as intermediate ``stepping stones'' that guide the system from its current state $\pos$ toward a target state $\pos^*$. 
By constructing a graph where these Gaussian centers serve as vertices, we can use established graph search algorithms to find paths that synthesize new motion policies from separate demonstrations, yielding efficient trajectories for unseen initial-target pairs in a zero-shot manner (see Fig.~\ref{fig:pipeline}).

\begin{figure*}[h]
    \centering
    \begin{subfigure}[t]{0.24\textwidth}
        \centering
        \includegraphics[width=\textwidth]{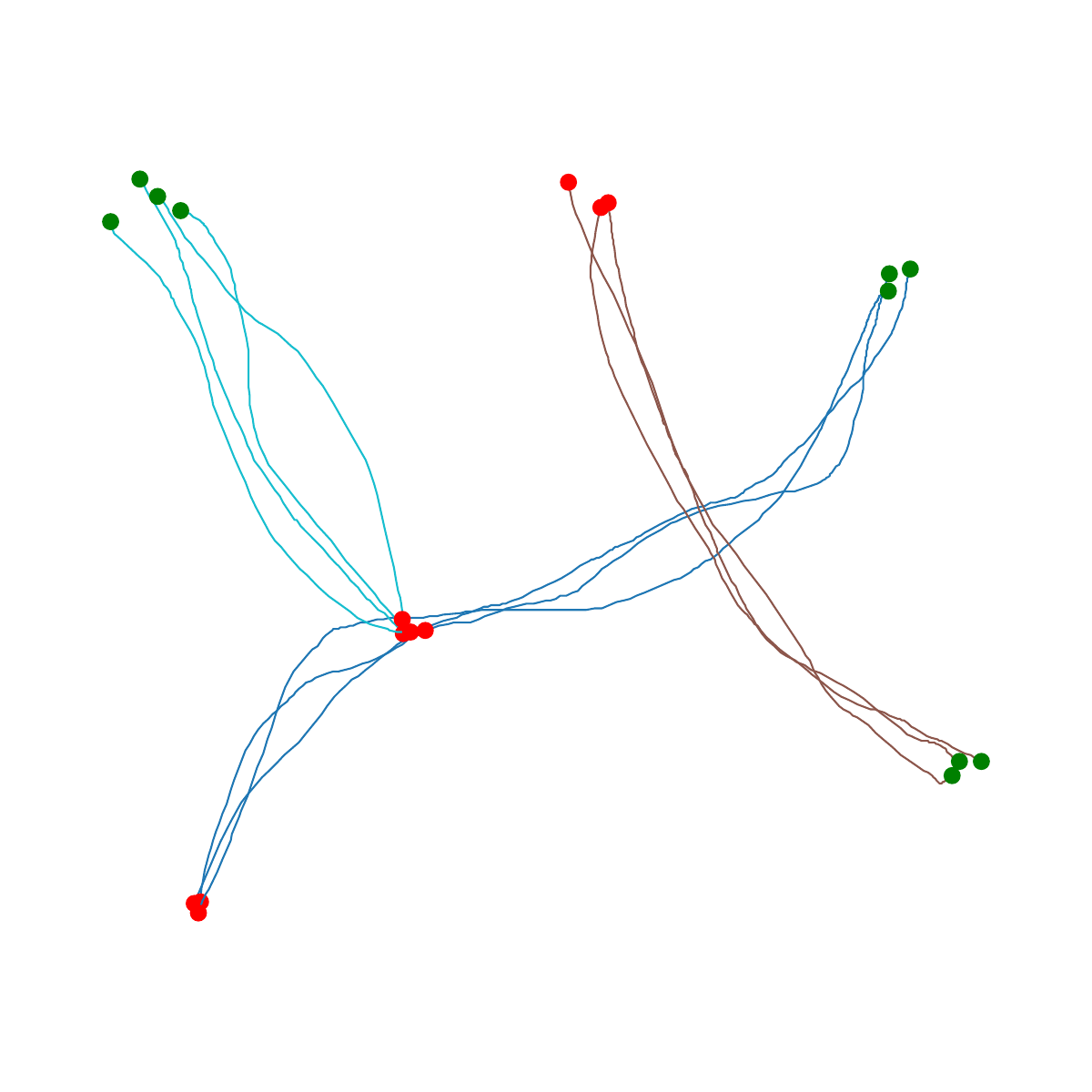}
        \caption{Set of Demonstrations.}
        \label{fig:pipeline:1}
    \end{subfigure}
    \hfill
    \begin{subfigure}[t]{0.24\textwidth}
        \centering
        \includegraphics[width=\textwidth]{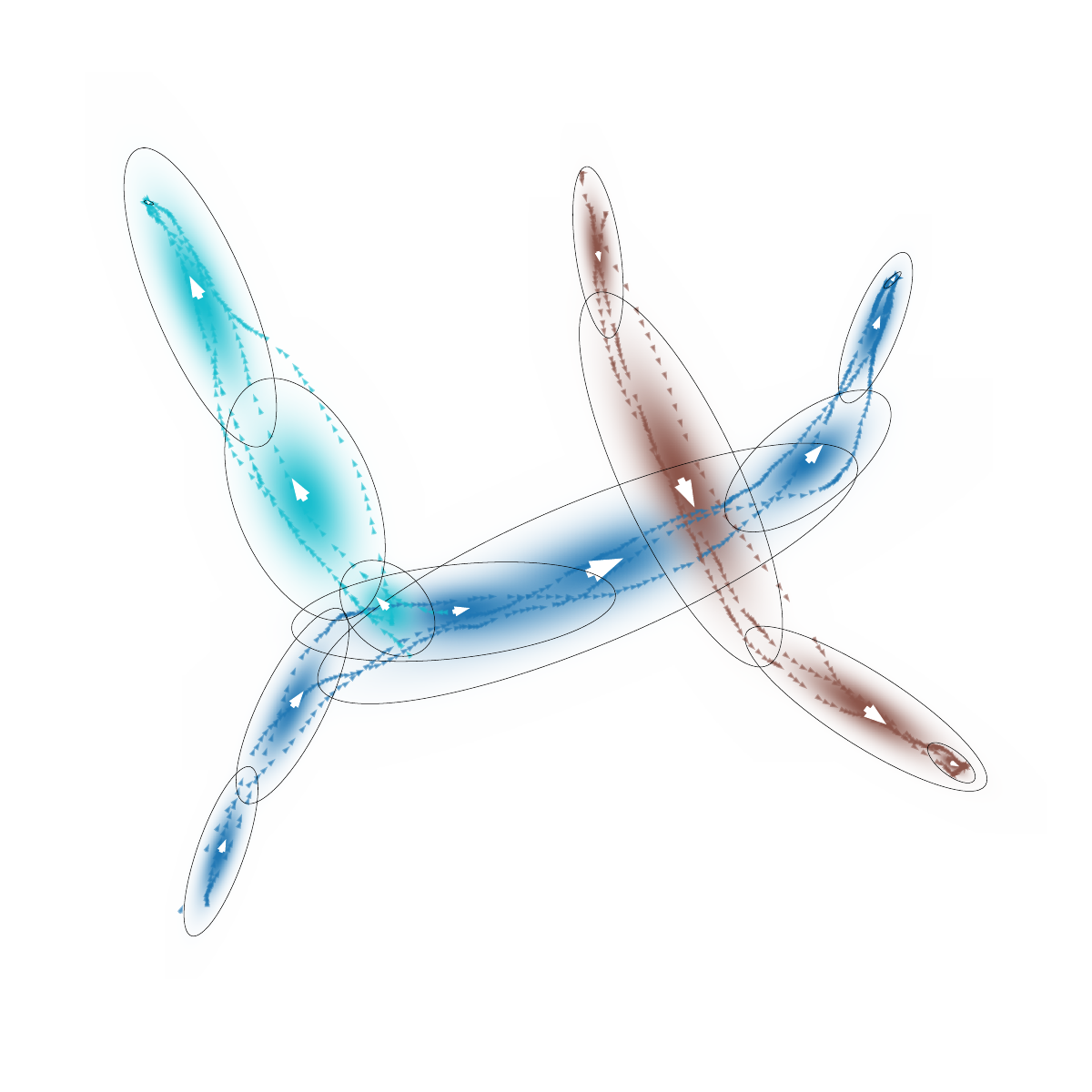}
        \caption{Demo-wise LPV-DS.}
        \label{fig:pipeline:2}
    \end{subfigure}
    \hfill
    \begin{subfigure}[t]{0.24\textwidth}
        \centering
        \includegraphics[width=\textwidth]{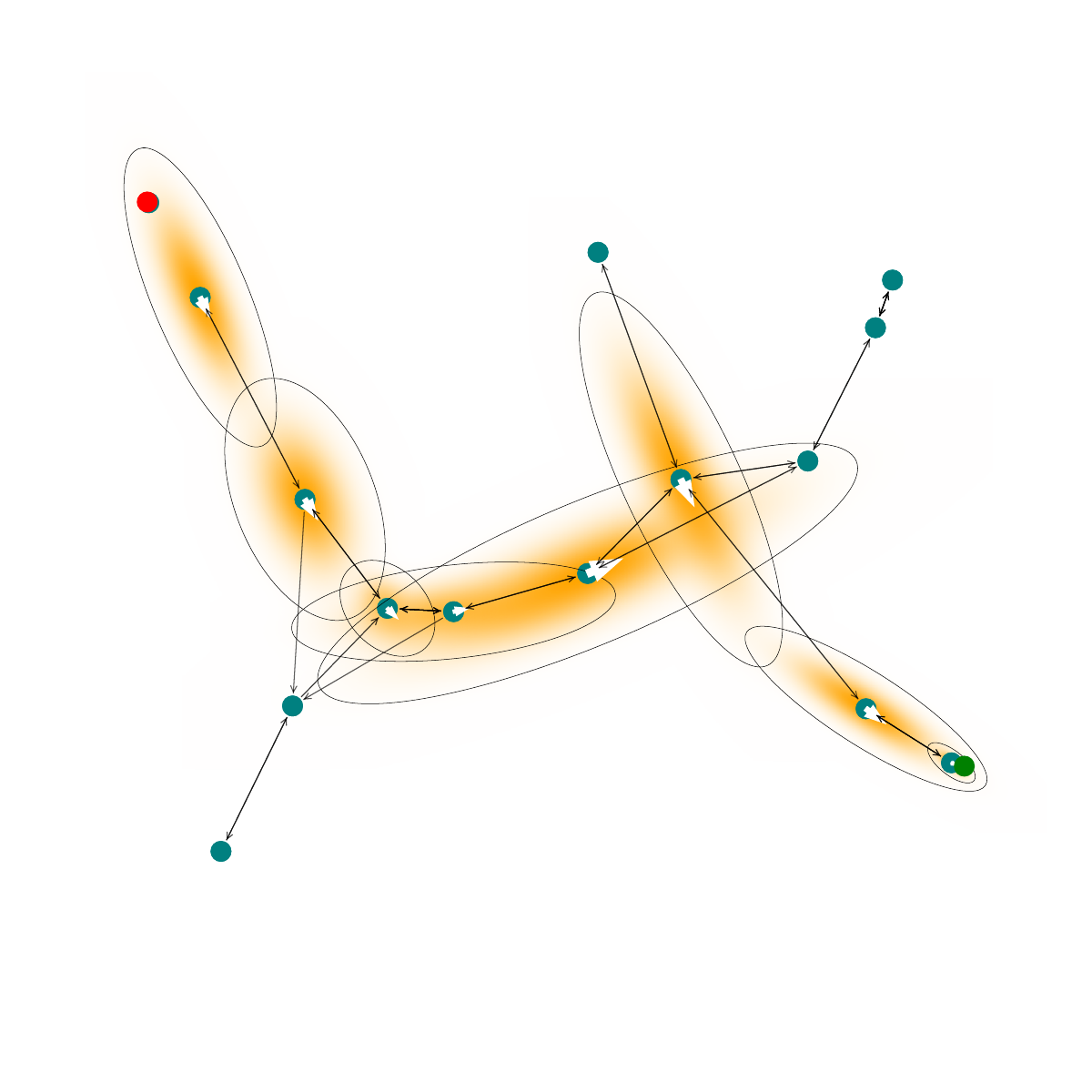}
        \caption{Gaussian Graph and solution.}
        \label{fig:pipeline:3}
    \end{subfigure}
    \hfill
    \begin{subfigure}[t]{0.24\textwidth}
        \centering
        \includegraphics[width=\textwidth]{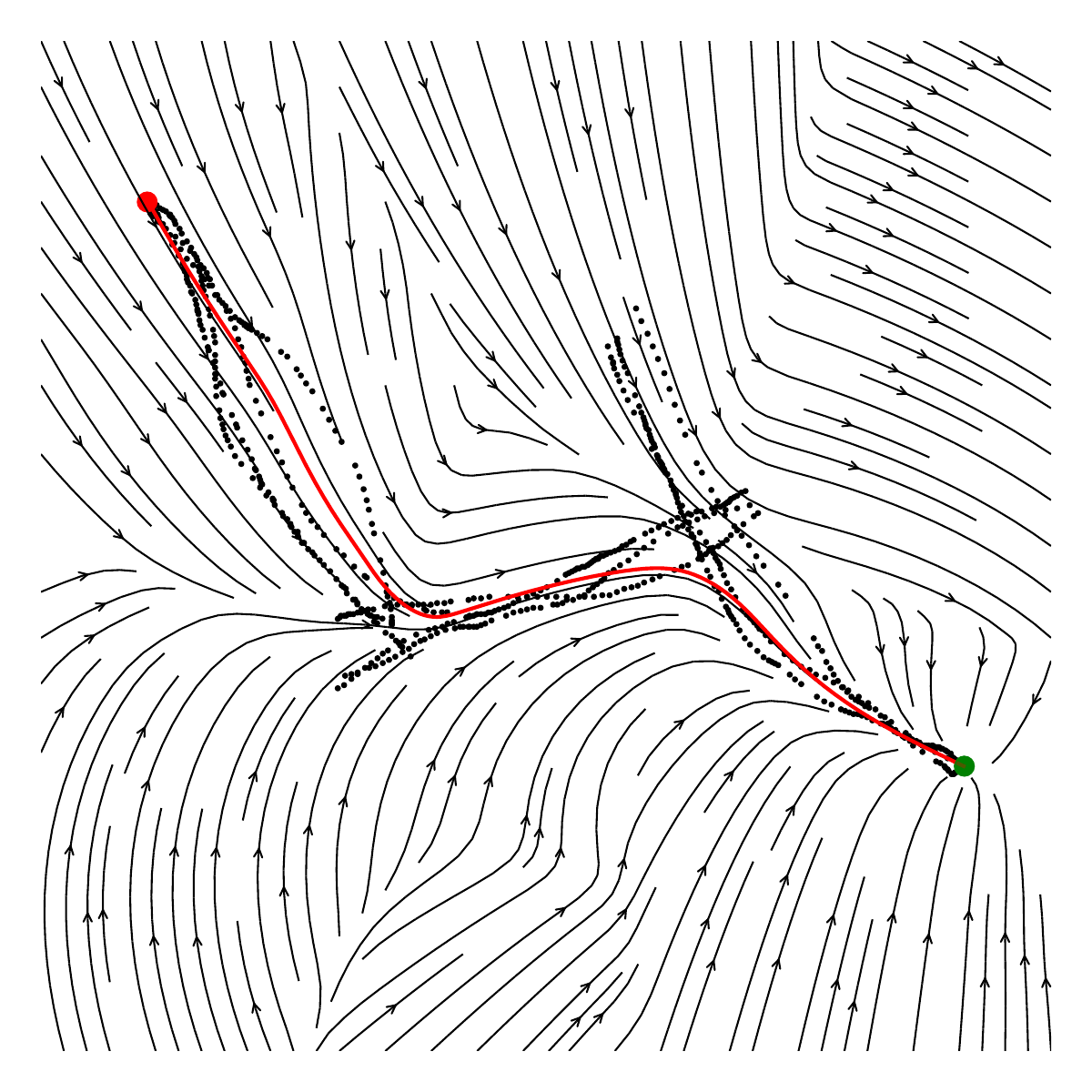}
        \caption{Stitched DS.}
        \label{fig:pipeline:4}
    \end{subfigure}
    \hfill
    \caption{The \emph{Demonstration Stitching} (Section~\ref{sec:DemoStitching}) pipeline using \emph{Shortest Path} and \emph{Reuse Gaussians}. On a set of separate demonstrations \textbf{(a)} (initial positions in red, target positions in green), LPV-DS is applied to each demonstration separately \textbf{(b)}, from which a Gaussian Graph is constructed and a shortest path from an initial position to a target position is found \textbf{(c)}. A DS is constructed to follow the shortest path \textbf{(d)}. Thus, a zero-shot generalization from separate motion demonstrations to a new, undemonstrated task is achieved.}
    \label{fig:pipeline}
    \vspace{-3mm}
\end{figure*}

Our main contributions can be summarized as:
\begin{itemize}
    \item The \emph{Gaussian Graph} (GG), a novel representation combining multiple demonstrations and enabling the use of well-known graph algorithms for synthesizing new motion policies.
    \item \emph{Demonstration Stitching}: GG-based construction of time-invariant DSs, enabling zero-shot initial-target generalization within data support and providing strict convergence guarantees.
    \item \emph{Demonstration Chaining}: A sequential GG-based DS producing sophisticated motions while maintaining the same generalization and convergence properties.
    \item An empirical evaluation demonstrating consistent generalization to unseen tasks and real-robot experiments verifying the method's capabilities.
\end{itemize}
Our implementation builds on LPV-DS~\cite{billard2022learning} and is provided along with datasets and additional details on our repository\footnote{\url{https://github.com/KilianFt/DemoStitching}\label{footnote:repo}.}. 

%% file: sections/Problem_Description.tex
We consider a setting where demonstrations for several different tasks are collected in the same workspace. Each demonstration $\demo = \{\pos_i^\mathit{ref}, \vel_i^\mathit{ref}\}_{i}$ --- with $\pos_i^\mathit{ref}, \vel_i^\mathit{ref}\in\Real^d$ --- aggregates position-velocity pairs from multiple reference trajectories. 
Intuitively, each demonstration provides position-velocity pairs that specify the desired robot motion within a region of the workspace (see Fig.~\ref{fig:pipeline:1}). Together, the different demonstrations form a demonstration set $\DemoSet$, which therefore contains a combined understanding of the workspace. The aim of this work is to effectively reuse information from these separate demonstrations for zero-shot generalization to unseen tasks by synthesizing new, safe movements from existing trajectories (see Fig.~\ref{fig:pipeline:4}) without collecting large amounts of data.


LPV-DS learns a time-invariant DS-based motion policy 
$\vel = f_\Params(\pos)$ 
from a single demonstration $\demo$.
The policy $f_\Params$ is parameterized by $\Params$ and constructed to ensure convergence to a stable equilibrium point $\pos^*\in\Real^d$.
That is,
$\lim_{t\rightarrow\infty} \| \pos - \pos^* \| = 0$ 
when starting from any position in $\Real^d$ and moving according to $\vel = f_\Params(\pos)$.
We say that $f_\Params$ is \emph{globally asymptotically stable} (GAS) with respect to an \emph{attractor} $\pos^*$.
This is done by formulating $f_\Params$ as a GMM of $K$ LTI dynamical systems:
\begin{equation}
    f_\Params(\pos) = \sum_{k=1}^{K} \gpost_k(\pos)\dsA_k(\pos - \pos^*),
\end{equation}
\begin{equation}
    \gpost_k(\pos) = \frac{\gprior_k \mathcal{N}(\pos\mid \gmean_k, \gcov_k)}{\sum_{j=1}^K \gprior_j \mathcal{N}(\pos\mid \gmean_j, \gcov_j)},
\end{equation}
where $\dsA_k \in \Real^{d\times d}$ is the state matrix describing the dynamics of the $k^\text{th}$ LTI DS component,
$\gprior_k$ is the prior probability of an observation belonging to Gaussian $k$,
and $\mathcal{N}(\pos\mid \gmean_k, \gcov_k)$ (with mean $\gmean_k$ and covariance $\gcov_k$) is the conditional probability density of observing $\pos$ at Gaussian $k$.
The posterior probability $\gpost_k(\pos)$ of $\pos$ belonging to system $k$ therefore weighs $\dsA_k$'s influence on $f_\Params$.
From this we see that the model parameters are $\Params = \left\{ \param_k^\mathit{DS}, \param_k^\mathit{GMM}\right\}_{k=1}^K$, with $\param_k^\mathit{DS} = \dsA_k$ and $\param_k^\mathit{GMM} = \left\{ \gprior_k, \gmean_k, \gcov_k\right\}$.
To find these parameters, first a GMM model is fit to $\demo$, giving $\param_k^\mathit{GMM}$ for $k=1,\dots,K$.
Then, for the DS parameters $\param_k^\mathit{DS}$, 
a parameterized Lyapunov function $V(\pos) = (\pos-\pos^*)^T \dsP (\pos-\pos^*)$ with $\dsP=\dsP^T \succ 0$ is formulated.
By ensuring that $\dsA_k^T \dsP + \dsP\dsA_k = \dsQ_k$ and $\dsQ_k = \dsQ_k^T \prec 0$
for all $k$, $f_\Params$ is GAS with respect to $\pos^*$.
$\dsP$ is found simultaneously with $\param_k^\mathit{DS}$.

By parameterizing $V(\pos)$ with $\dsP$, LPV-DS achieves greater flexibility in the LTI systems $\dsA_k$~\cite{figueroa2018physically} compared to methods like SEDS~\cite{medina2017learning}, which uses the unparameterized version.
However, LPV-DS --- just like SEDS but to a lesser degree --- remains constrained by $V(\pos)$ to guarantee GAS, and therefore struggles as the non-linearity of the underlying motions increases. 
Consequently, naively applying LPV-DS to aggregated demonstrations from multiple tasks causes the motion policy to fit the demonstration data less accurately as task diversity grows. 
Section~\ref{sec:chaining} proposes a method that circumvents this constraint while still guaranteeing GAS.



%% file: sections/GaussianGraph.tex
To aggregate multiple demonstrations, we introduce the \emph{Gaussian Graph}~(GG), which enables the construction of new motion policies for unseen tasks (see Fig.~\ref{fig:pipeline:3}).

First, LPV-DS is applied to each demonstration $\demo\in\DemoSet$ to obtain motion policies $\left\{ f_\demo \mid \demo\in\DemoSet \right\}$ (dropping the parameter $\Params$ subscript), where each $f_\demo$ has parameters $\Params_\demo$, $\dsP_\demo$, and $K_\demo$.
The GG is a directed, weighted graph $\Graph = \langle\Nodes,\Edges, \Weights\rangle$ with vertices $\Nodes$, edges $\Edges\subseteq\Nodes\times\Nodes$ and edge-weights $\Weights:\Edges\rightarrow\Real^+$.
Each vertex in $\Nodes$ corresponds to a component in the motion policies.
Specifically, for every component $k\in[1,..,K_\demo]$ of motion policy $f_\demo$ with parameters $\param_{k,\demo}^\mathit{GMM} = \left\{ \gprior_{k,\demo}, \gmean_{k,\demo}, \gcov_{k,\demo}\right\}$ and $\param_k^\mathit{DS} = \dsA_{k,\demo}$, there exists a vertex $v_{k,\demo}\in\Nodes$.
The \emph{direction} $\direction_{k,\demo}$ is defined by the dynamics at the Gaussian center: 
$\direction_{k,\demo} = \dsA_{k,\demo}(\gmean_{k,\demo} - \pos^*_\demo)$, 
where $\pos_\demo^*$ is the attractor of $\demo$.

Informally, we assume the full system dynamics $f_\demo$ near component $k$ strongly aligns with $k$'s direction,
$f_\demo(\gmean_{k,\demo}) \approx \direction_{k,\demo}$.
Under this assumption, dynamics near $\gmean_{k,\demo}$ lead toward components in the direction of $\direction_{k,\demo}$, 
with closer components exerting greater influence until their own dynamics eventually dominate.
Based on this principle, the connection from vertex $v_i$ to $v_j$ is stronger when they are closer and $v_j$ aligns more with $\direction_i$.

Edges in $\Edges$ represent possible component transitions.
An edge $(v_i, v_j)\in\Edges$ exists only if:
\begin{itemize}
    \item $\cos(\varphi_{ij}) > 0$, where $\varphi_{ij}$ is the angle between $\direction_i$ and $\gmean_j - \gmean_i$, ensuring $\direction_{i}$ points at least partly toward component $j$, and
    \item the Bhattacharyya Coefficient~\cite{bhattacharyya1943measure} between Gaussians of $i$ and $j$ exceeds the user-defined threshold $\parambhat$, ensuring sufficient data connectivity between components.
\end{itemize}
The edge weight is defined as
\begin{equation}
    \label{eq:graph:weight}
    \Weights((v_i,v_j)) = \frac{||\gmean_{i}-\gmean_j||^\paramdist}{\cos(\varphi_{i,j})^\paramdir},
\end{equation}
where $\paramdist$ and $\paramdir$ are user-defined parameters. 

The user-defined parameters ($\parambhat, \paramdist, \paramdir$) collectively govern the GG's connectivity and traversal cost, balancing adherence to demonstrations with the flexibility to synthesize novel behaviors. 
The threshold $\parambhat$ acts as a structural safety filter by dictating the minimum data support required for a transition. Higher $\parambhat$ values strictly enforce empirical support, while lower values suit applications more tolerant of out-of-data movements;
however, excessively high thresholds risk fragmenting the GG.
For viable edges, $\paramdist$ and $\paramdir$ act as cost regularizers on spatial separation and dynamic consistency, respectively. 
A high $\paramdist$ heavily penalizes long jumps, favoring paths of closely packed Gaussians, while a high $\paramdir$ enforces strict directional alignment between a Gaussian's inherent dynamics and the physical path to the next vertex. Conversely, lowering these two parameters widens the transition ``field-of-view'', enabling the graph to bridge disjoint tasks at the risk of leaving data-supported regions.

Depending on the application, some demonstrations may be safely executed in reverse. 
We refer to this as the \emph{bi-directionality assumption}.
The graph is then expanded by creating a counterpart vertex $v_i'$ for every original vertex $v_i$ corresponding to such a demonstration, with reversed dynamics $\dsA_i' = -\dsA_i$ and direction $\direction_i' =- \direction_i$.

Finally, the methods in subsequent sections will primarily use shortest paths in the GG. 
For this reason, the GG can be reduced in size by removing every edge $(v_i, v_j)\in\Edges$ not included in the shortest path from $v_i$ to $v_j$. 
This operation does not have any effect on shortest paths in the GG since such edges would never be included, however, it does improve computation times.

%% file: sections/TimeInvariantStitching.tex
\label{sec:DemoStitching}

In this section, we present four methods for stitching together separate demonstrations into a single time-invariant motion policy.
These four methods arise from the combination of two ways to use the GG (\emph{Shortest Path} and \emph{Shortest Path Tree}) with two ways of reusing elements from the initial LPV-DS policies $\left\{ f_\demo \mid \demo\in\DemoSet \right\}$.

\subsection{Graph Strategies}

\subsubsection{Shortest Path}
\label{sec:shortest_path}

We assume for this method that an initial position $\pos_0$ and attractor $\pos^*$ are given.
A vertex is added to the GG for each $v_0$ and $v^*$ respectively, with $\gmean_0=\pos_0$ and $\gmean^*=\pos^*$.
For $v^*$, only incoming edges connect it to existing vertices; for every $v_j\in\Nodes$,~\eqref{eq:graph:weight} is used to compute an initial weight $w_{j,*}$, with the same requirement that $\cos(\varphi_{j,*})>0$. 
However, the weight is further modified to account for data support provided by $v_j$, giving $\Weights((v_j,v^*)) = w_{j,*} / \mathcal{N}(\pos_0 \mid \gmean_j, \gcov_j)$.
On the other hand, only outgoing edges from $v_0$ are added. 
Since $v_0$ has no notion of direction, for every $v_j\in\Nodes$, $v_0$ adopts direction $\direction_j$ when calculating $w_{0,j}$ with~\eqref{eq:graph:weight}, also requiring $\cos(\varphi_{0,j})>0$. 
Factoring in the amount of data-support provided by $v_j$, we get $\Weights((v_0, v_j)) = w_{0,j} / \mathcal{N}(\pos_0 \mid \gmean_j, \gcov_j)$.
A standard shortest-path algorithm (e.g., Dijkstra's~\cite{dijkstra}) is used to find the shortest path $\langle v_0, v_1, \dots,v_e, v^* \rangle$ from $v_0$ to $v^*$,
with the set $\ggSolution = \left\{v_1, \dots, v_e\right\}\subseteq\Nodes$ containing the vertices corresponding to components.

\subsubsection{Shortest Path Tree}\label{sec:all_paths}
Given only an attractor $\pos^*$, a corresponding vertex $v^*$ is added to the GG in the same way as above. The shortest path tree to $v^*$ is found, with all vertices in $\Nodes$ for which there exists a path to $v^*$ stored in $\ggSolution$.
This approach yields a global policy that is agnostic to a starting position.
The bi-directionality assumption, however, introduces a critical problem:
a path to $v^*$ may exist from both a vertex $v_i$ and its reverse counterpart $v_i'$. 
Including both vertices in the final policy is problematic, as their opposing dynamics ($\direction_i = -\direction_{i}'$) effectively cancel each other out.
To resolve this ambiguity, we prune the vertex set by applying a simple heuristic: for any such pair $\langle v_i, v_i' \rangle$, we retain only the vertex with the shorter path to $v^*$. 


\subsection{Levels of DS Reuse}\label{sec:stitch}


Using the set of vertices $\ggSolution$ from either of the two graph strategies introduced above, a new motion policy can be constructed.
Intuitively, reusing components from the initial policies $\left\{ f_\demo \mid \demo\in\DemoSet \right\}$ simplifies computation --- but also constrains the new policy.
We consider two levels of reuse:

\subsubsection{No Reuse}

When LPV-DS is applied to a demonstration $\demo\in\DemoSet$, data points in $\demo$ are clustered to each of the $K_\demo$ Gaussian components, resulting in the clustering $C_{k,\demo}\subseteq\demo$ for each $k=1,\dots,K_\demo$.
In this method, all data points clustered to vertices in $\ggSolution$ are collected, $\bigcup_{n_{i}\in \ggSolution} C_{i}$, and a completely new LPV-DS model is fitted.
Essentially, the GG is used to filter data points from the demonstrations, after which LPV-DS is applied anew.
With the bi-directionality assumption, the data points' velocity $\vel$ associated with a reversed vertex are negated (that is, $-\vel$ is used).

\subsubsection{Reusing Gaussians}
This method reuses the GMM parameters $\param_{i}^\mathit{GMM} = \left\{ \gprior_{i}, \gmean_{i}, \gcov_{i}\right\}$ and corresponding data points $C_i$ for each vertex $v_i\in\ggSolution$.
The priors $\{\gprior_i\}$, however, are normalized to sum to $1$.
With the GMM structure fixed, only the second LPV-DS step is performed to find the DS parameters $\param_i^{DS} = \dsA_i$ and a shared Lyapunov parameter $\dsP$ that best fits the filtered data points.


We explored not only reusing Gaussians but also $\dsP$ to only recompute the dynamics $\dsA_i$ or reusing $\dsA_i$, and instead seek a single $\dsP$. However, those approaches yielded relatively low performance and are therefore omitted.


%% file: sections/Chaining.tex
The overall approach of \emph{Demonstration Chaining} is to transition through a sequence of simpler DSs to collectively realize motions more sophisticated than those achievable with any single time-invariant DS alone.
For instance, the motions shown in Fig.~\ref{fig:chaining_example} are impossible for standard LPV-DS or the stitching methods presented in the previous section: 
they require multiple velocities at a single position --- violating time-invariance --- and follow paths incompatible with the decay of any parameterized Lyapunov function.
In this section, we 
define a \emph{DS-Chain},
formulate criteria under which it is GAS with respect to an attractor $\pos^*$,
and then describe our specific method for using the GG to construct such a GAS DS-Chain.

\begin{figure}[t]
    \centering
    \begin{subfigure}[t]{0.48\columnwidth}
        \centering
        \includegraphics[width=\textwidth]{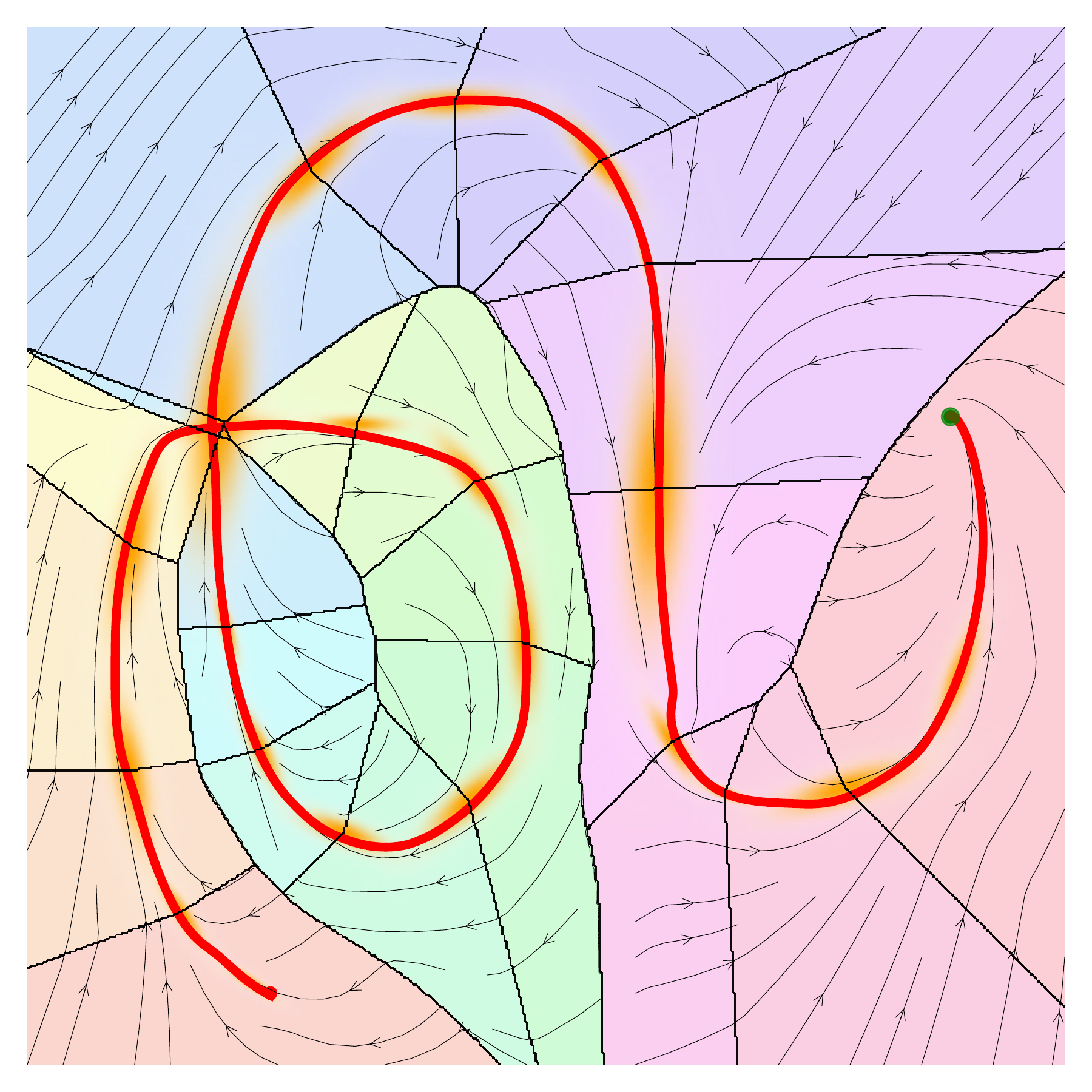}
        \caption{}
        \label{fig:chaining_example:1}
    \end{subfigure}
    \hfill
    \begin{subfigure}[t]{0.48\columnwidth}
        \centering
        \includegraphics[width=\columnwidth]{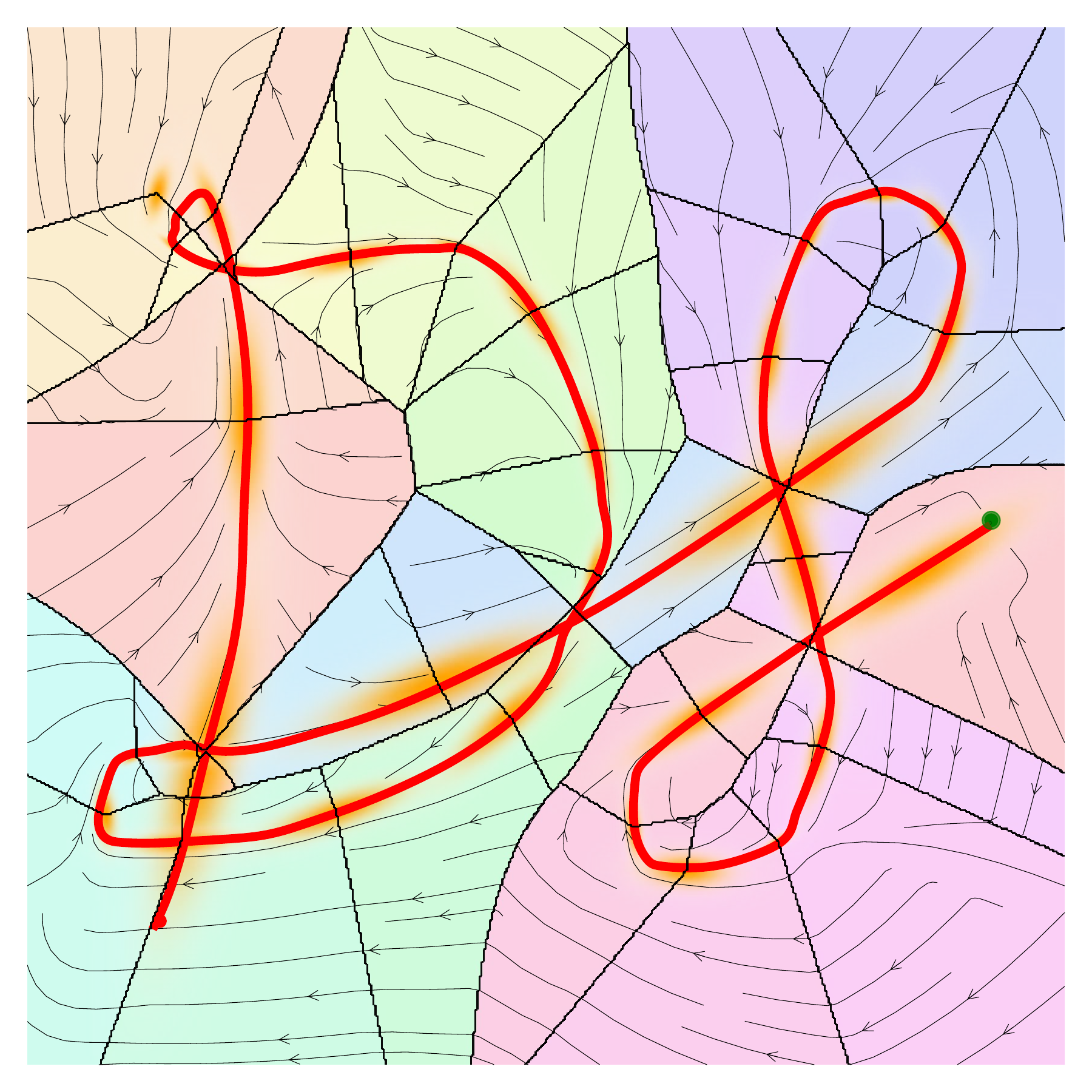}
        \caption{}
        \label{fig:chaining_example:2}
    \end{subfigure}
    \caption{Examples of two DS-Chains, with the resulting motions when starting at an initial position in red. Colored regions show the active DS at the corresponding part of the trajectory, with the underlying Gaussians shown in orange.}
    \label{fig:chaining_example}
    \vspace{-3mm}
\end{figure}

\begin{definition}[DS-Chain]
    A DS-Chain is a tuple $\DSC = \langle \DSseq, \TriggerSeq, \TransSeq, \InterDSseq \rangle$, where 
    $\DSseq = \langle f_1, \dots, f_N \rangle$ contains $N$ dynamical systems;
    $\TriggerSeq = \langle \trigger_1, \dots, \trigger_{N-1} \rangle$ and $\TransSeq = \langle \trans_1, \dots, \trans_{N-1} \rangle$ contain $N-1$ \emph{trigger} and \emph{timer} predicates, respectively, each mapping to $\{ \top, \perp\}$; and
    $\InterDSseq = \langle \inter_1, \dots, \inter_{N-1} \rangle$ contains $N-1$ dynamical systems.
\end{definition}

We model a DS-Chain $\DSC$ as a finite hybrid automaton with \emph{nominal states} $S = \left\{s_1,\dots, s_{N}\right\}$ and \emph{intermediate states} $S' = \left\{s_1', \dots, s_{N-1}'\right\}$.
The system initializes in $s_1$. 
Each nominal state $s_i$ transitions to $s_i'$ when the trigger $\gamma_i$ is $\top$, and each intermediate state $s_i'$ transitions to $s_{i+1}$ when the timer $\trans_i$ is $\top$.
Thus, the automaton's structure is $s_1 \overset{\gamma_1}{\rightarrow} s_1' \overset{\trans_1}{\rightarrow} s_2 \overset{\gamma_2}{\rightarrow} s_2' \overset{\trans_2}{\rightarrow} \dots \overset{\trans_{N-1}}{\rightarrow} s_N$.
The system dynamics are defined by the state of the DS-Chain: 
when in state $s_i$ the dynamics are governed by $f_i$, when in state $s_i'$ the dynamics are governed by $\inter_i$.
Fig.~\ref{fig:chaining_example} visualizes this, with colored regions corresponding to the active nominal state in $S$ and associated DS in $\DSseq$ during that part of the trajectory. Note that the transition DSs in $\InterDSseq$ are not visualized. 
Theorem~\ref{theorem:DSC_GAS} provides criteria under which $\DSC$ is GAS with respect to $\pos^*$.

\begin{theorem}[Global Asymptotically Stable DS-Chain]
    \label{theorem:DSC_GAS}
    A DS-Chain $\DSC = \langle \DSseq, \TriggerSeq, \TransSeq, \InterDSseq \rangle$ --- modeled by states in $S$ and $S'$ --- is GAS with respect to an attractor $\pos^*$ if:
    \begin{enumerate}
        \item the DSs in $\DSseq$ and $\InterDSseq$ are bounded, \label{theorem:DSC_GAS:criteria_1}
        \item $\forall i = 1,\dots,N-1$, trigger $\trigger_i$ is guaranteed to evaluate to $\top$ within a finite time when in state $s_i$, \label{theorem:DSC_GAS:criteria_2}
        \item $\forall i = 1,\dots,N-1$, $\trans_i$ is guaranteed to evaluate to $\top$ within a finite time when in state $s_i'$, \label{theorem:DSC_GAS:criteria_3}
        \item $f_N$ is GAS with respect to $\pos^*$. \label{theorem:DSC_GAS:criteria_4}
    \end{enumerate}
\end{theorem}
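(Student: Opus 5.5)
The plan is to follow the hybrid automaton along its only possible route, $s_1 \to s_1' \to s_2 \to \dots \to s_N$, and show that from any initial condition it reaches $s_N$ after finitely many switches, in finite time and at a finite position. From that moment the closed-loop dynamics are exactly $\vel = f_N(\pos)$, and criterion~\ref{theorem:DSC_GAS:criteria_4} finishes the argument. The automaton is a linear chain with no back-edges, so every execution visits each state at most once. There is therefore no dwell-time or common-Lyapunov issue as in general switched systems: only finitely many switches can ever occur.

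The first step is an induction on $i$. The claim for each $i$ is that, starting from any $\pos_0 \in \Real^d$, the trajectory enters $s_i$ at some finite time $t_i$ with $\pos(t_i)$ finite. For $i=1$ this holds trivially with $t_1 = 0$.

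For the inductive step, suppose the system is in $s_i$ from $t_i$ onward.
\begin{itemize}
\item Criterion~\ref{theorem:DSC_GAS:criteria_2} gives a finite $T_i$ at which $\trigger_i$ becomes $\top$, so the system switches to $s_i'$ at $t_i + T_i$.
\item Criterion~\ref{theorem:DSC_GAS:criteria_1} bounds $\|f_i\| \le M_i$. Hence $\|\pos(t_i+T_i) - \pos(t_i)\| \le M_i T_i < \infty$; in particular the solution cannot escape to infinity in finite time, and the state at the switch is well defined.
\item The same reasoning applies in $s_i'$. Criterion~\ref{theorem:DSC_GAS:criteria_3} gives a finite $T_i'$ after which $\trans_i$ is $\top$, and boundedness of $\inter_i$ keeps the position finite.
\end{itemize}
The system therefore enters $s_{i+1}$ at the finite time $t_{i+1} = t_i + T_i + T_i'$ with $\pos(t_{i+1})$ finite. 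After $N-1$ steps it reaches $s_N$ at finite $t_N$ from a finite point $\pos(t_N)$. Since $s_N$ has no outgoing transition, the dynamics remain $f_N$ for all $t \ge t_N$. Because $f_N$ is GAS with respect to $\pos^*$, $\lim_{t\to\infty}\|\pos(t) - \pos^*\| = 0$ from $\pos(t_N)$. This gives global attractivity from every initial condition.

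The main obstacle is the \emph{stability} part of GAS: Lyapunov stability of $\pos^*$ for the whole chain, not only attractivity. Starting near $\pos^*$, the earlier modes $f_1,\dots,f_{N-1}$ and the $\inter_i$ may first drive the state far away before $s_N$ is reached. My first approach is to interpret GAS, as the paper does in Section~\ref{sec:preliminaries}, as global convergence of $\pos$ to $\pos^*$, and to state explicitly that this is the property being proved. If the full $\varepsilon$--$\delta$ notion is wanted, I would argue that the excursion is bounded by $\sum_i (M_i T_i + M_i' T_i')$. This requires reading criteria~\ref{theorem:DSC_GAS:criteria_2}--\ref{theorem:DSC_GAS:criteria_3} as giving uniform time bounds, and I would flag that as an additional assumption. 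Stability then holds in the sense of bounded transients followed by the stability of $f_N$, but it cannot hold in the strict sense unless the earlier modes are trivial near $\pos^*$. A secondary point is existence of solutions within each mode, which I would take as given from standard ODE theory for the smooth DSs considered here.
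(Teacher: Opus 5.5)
Your proposal is correct and follows the same route as the paper's proof. Every trigger and timer along the linear chain fires in finite time, boundedness of the dynamics keeps the state finite when $s_N$ is entered, and the GAS property of $f_N$ then gives convergence to the attractor; your caveat about Lyapunov stability is also well placed, since the paper's proof likewise establishes only global convergence, matching its definition of GAS in Section~\ref{sec:preliminaries}, so your explicit induction and that remark make the argument more careful without changing it.
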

\begin{proof}
    Since $\trigger_i$ is guaranteed to evaluate to $\top$ within a finite time when in state $s_i$, it holds that the transition from $s_i$ to $s_i'$ will occur within a finite time.   
    Likewise, since $\trans_i$ is guaranteed to evaluate to $\top$ within a finite time when in state $s_i'$, it holds that the transition from $s_i'$ to $s_{i+1}$ will occur within a finite time.
    Thus, the system initializes in state $s_1$ and is guaranteed to transition through all states to eventually arrive and remain in $s_N$.
    Since all dynamics $f_1, \dots, f_N$ and $\inter_1, \dots, \inter_{N-1}$ are bounded, the system's state will be bounded when entering $s_N$, from which the dynamics are governed by $f_N$ --- with $f_N$ being GAS with respect to $\pos^*$.
    In other words, starting from any bounded position and following the dynamics specified by $\DSC$, eventually the system's state will be bounded and the dynamics will be governed by $f_N$, inevitably leading to $\pos^*$.
    Therefore, $\DSC$ is GAS with respect to $\pos^*$.
\end{proof}

\begin{figure*}[t]
    \centering
    \begin{subfigure}[t]{0.193\textwidth}
        \centering
        \includegraphics[width=\textwidth]{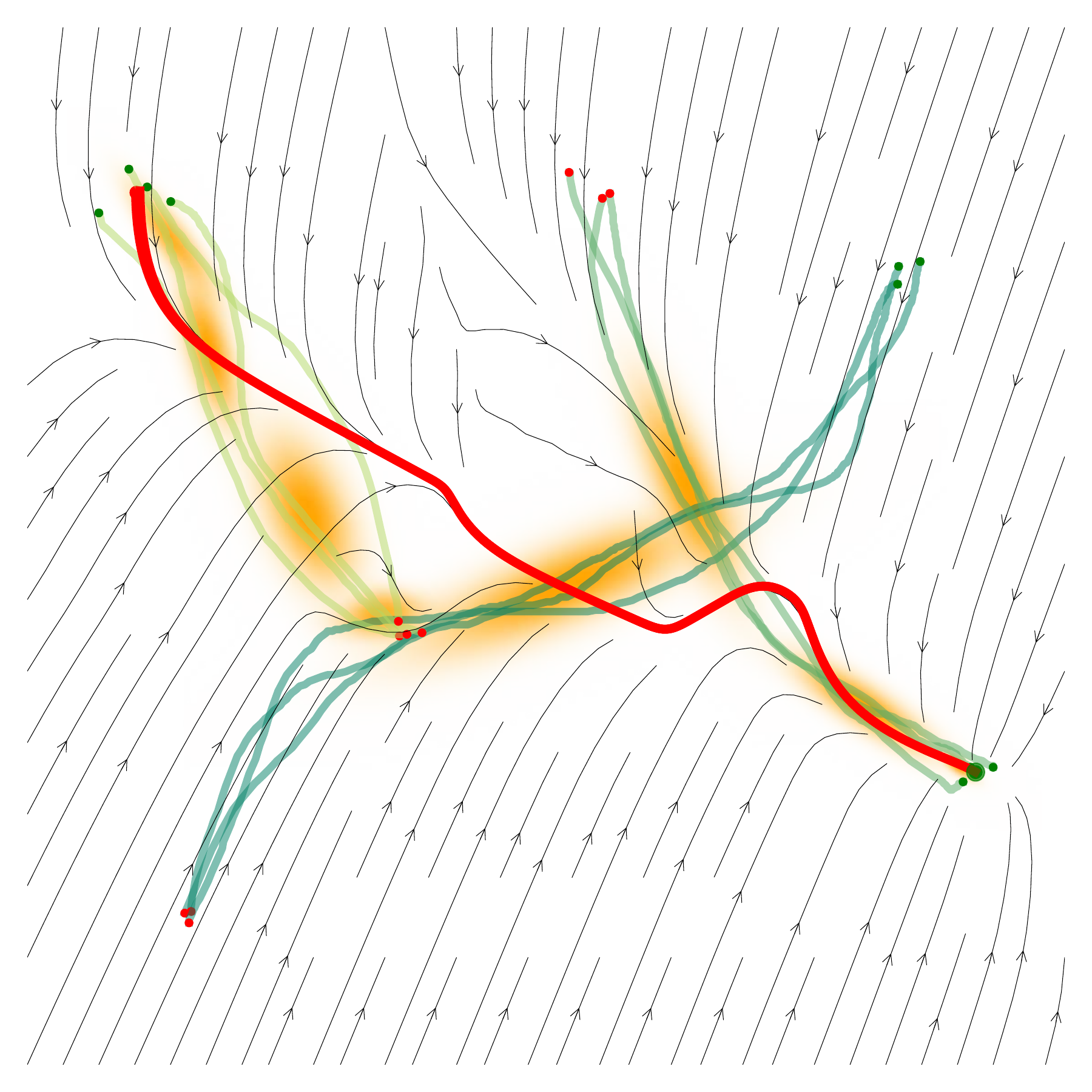}
        \caption{Stitch-SP (All)}
        \label{fig:methods_comparison_a}
    \end{subfigure}
    \hfill
    \begin{subfigure}[t]{0.193\textwidth}
        \centering
        \includegraphics[width=\textwidth]{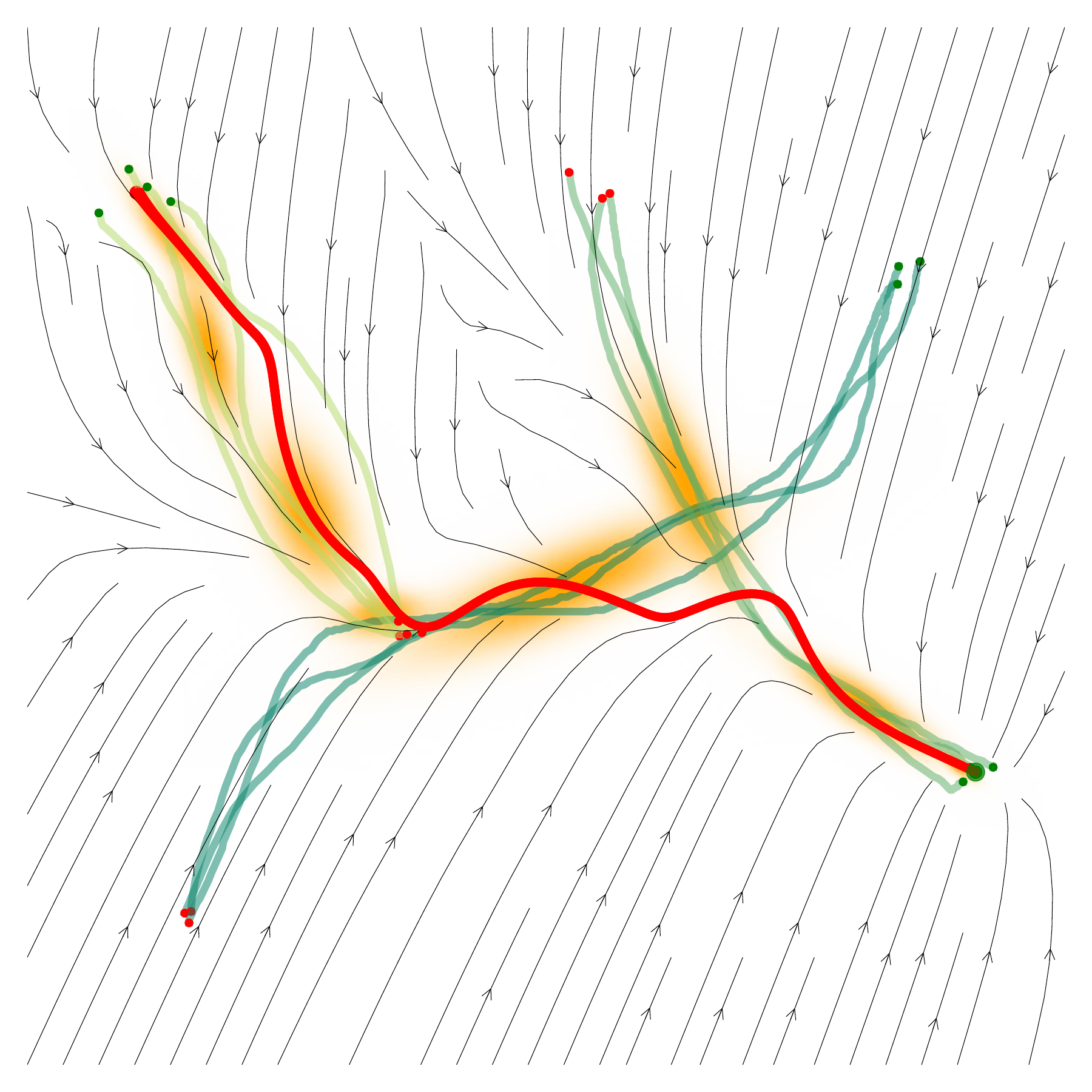}
        \caption{Stitch-SP (DS)}
        \label{fig:}
    \end{subfigure}
    \hfill
    \begin{subfigure}[t]{0.193\textwidth}
        \centering
        \includegraphics[width=\textwidth]{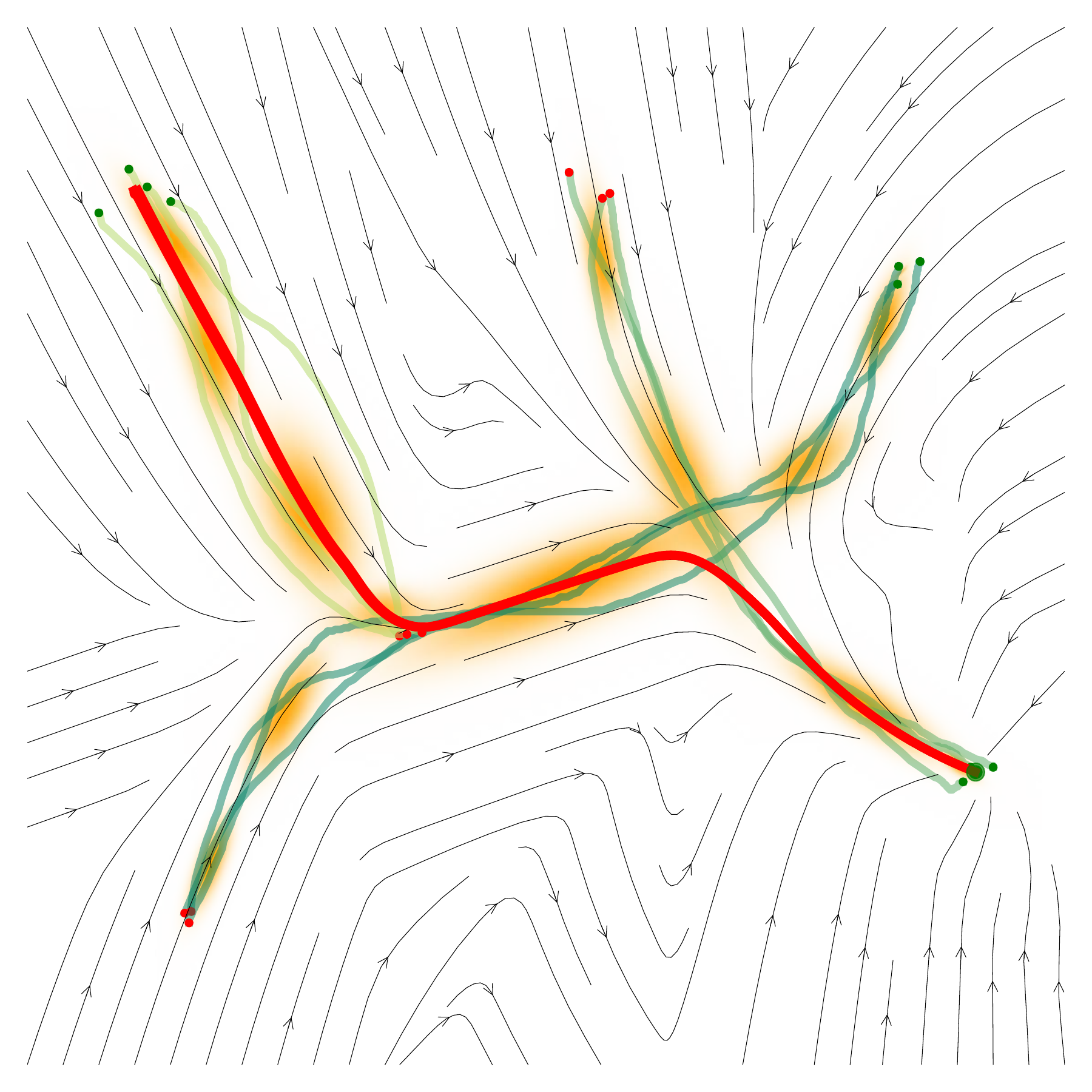}
        \caption{Stitch-SPT (All)}
        \label{fig:}
    \end{subfigure}
    \hfill
    \begin{subfigure}[t]{0.193\textwidth}
        \centering
        \includegraphics[width=\textwidth]{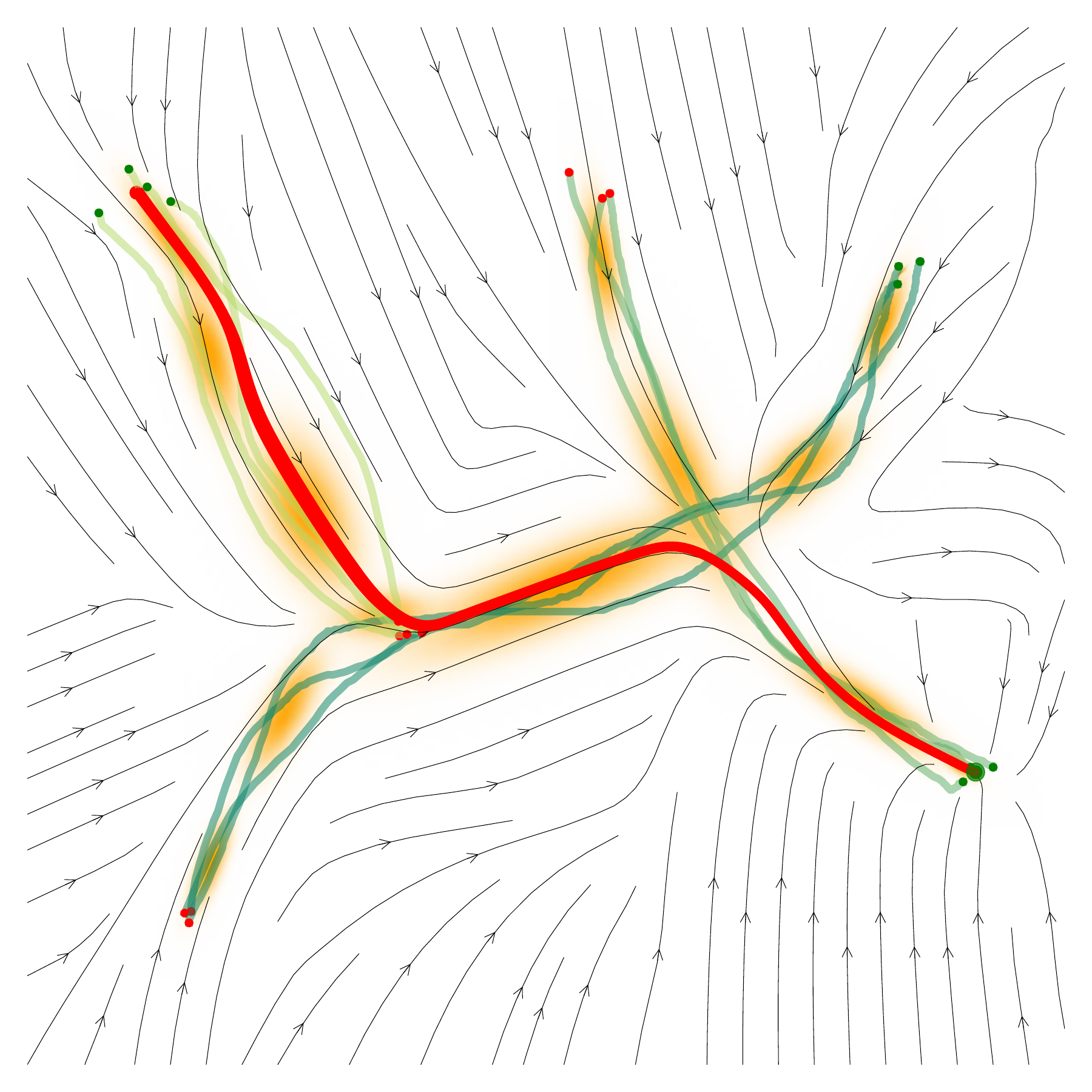}
        \caption{Stitch-SPT (DS)}
        \label{fig:}
    \end{subfigure}
    \hfill
    \begin{subfigure}[t]{0.193\textwidth}
        \centering
        \includegraphics[width=\textwidth]{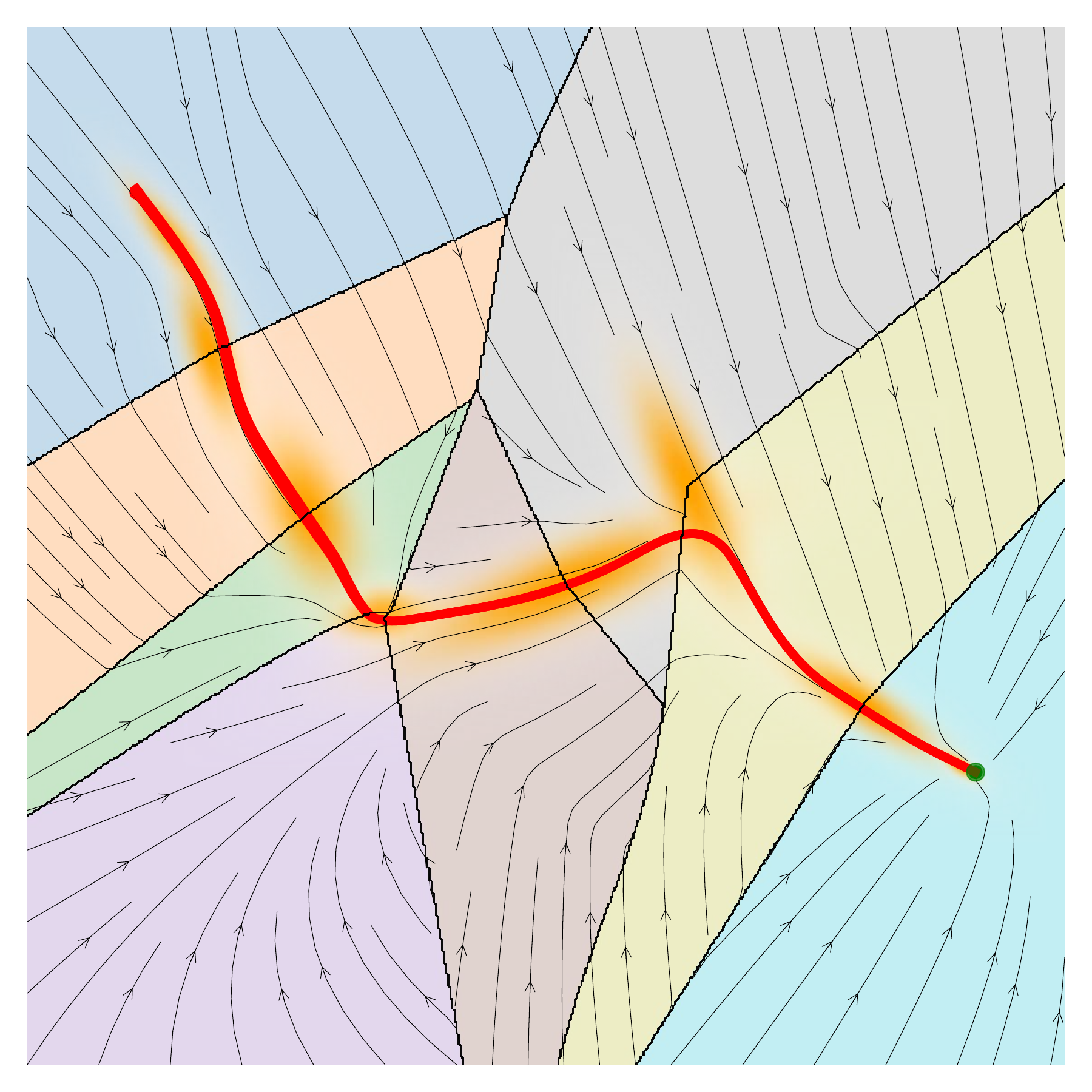}
        \caption{Chaining (All)}
        \label{fig:methods_comparison_e}
    \end{subfigure}
    \begin{subfigure}[t]{0.193\textwidth}
        \centering
        \includegraphics[width=\textwidth]{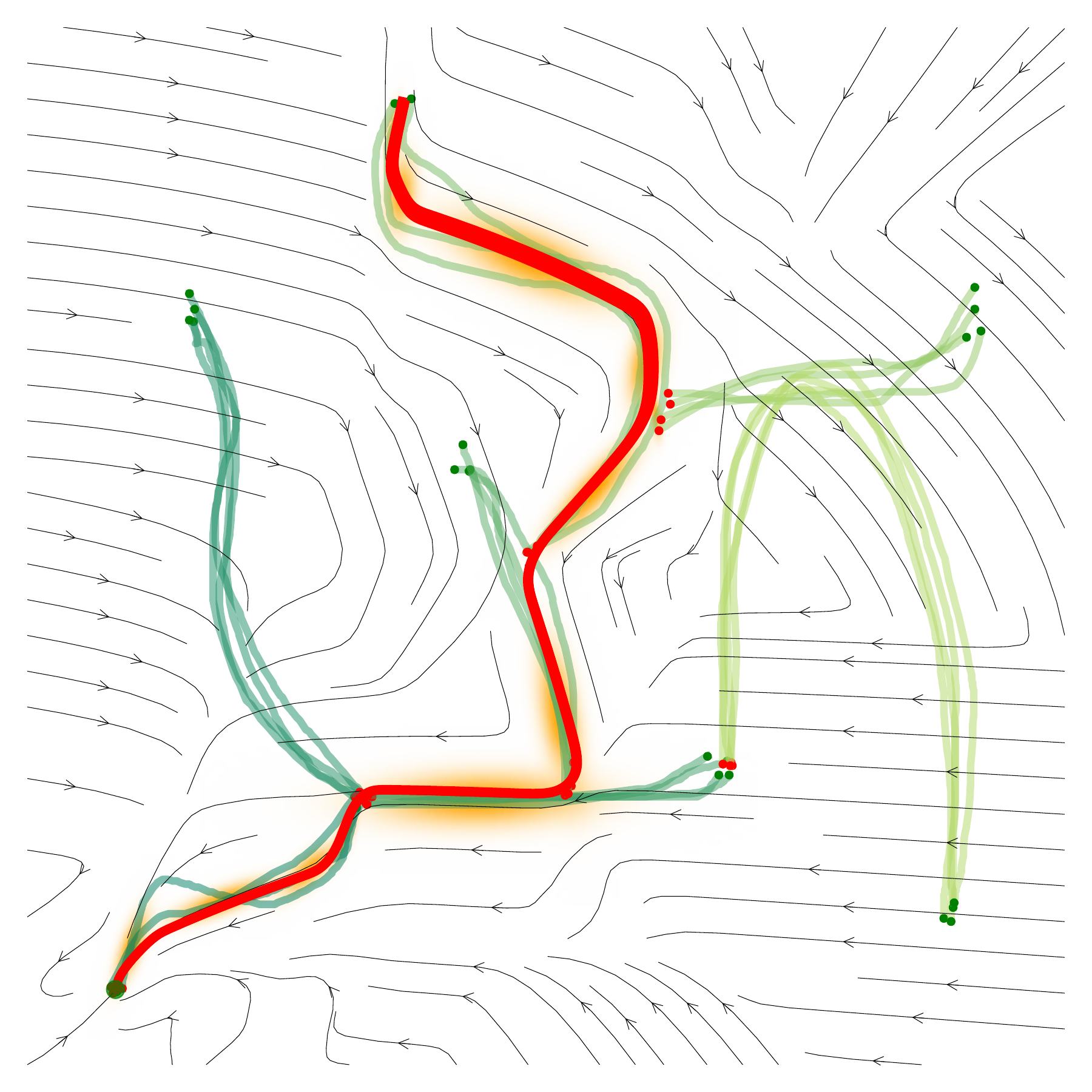}
        \caption{Stitch-SP (All)}
        \label{fig:methods_comparison_f}
    \end{subfigure}
    \hfill
    \begin{subfigure}[t]{0.193\textwidth}
        \centering
        \includegraphics[width=\textwidth]{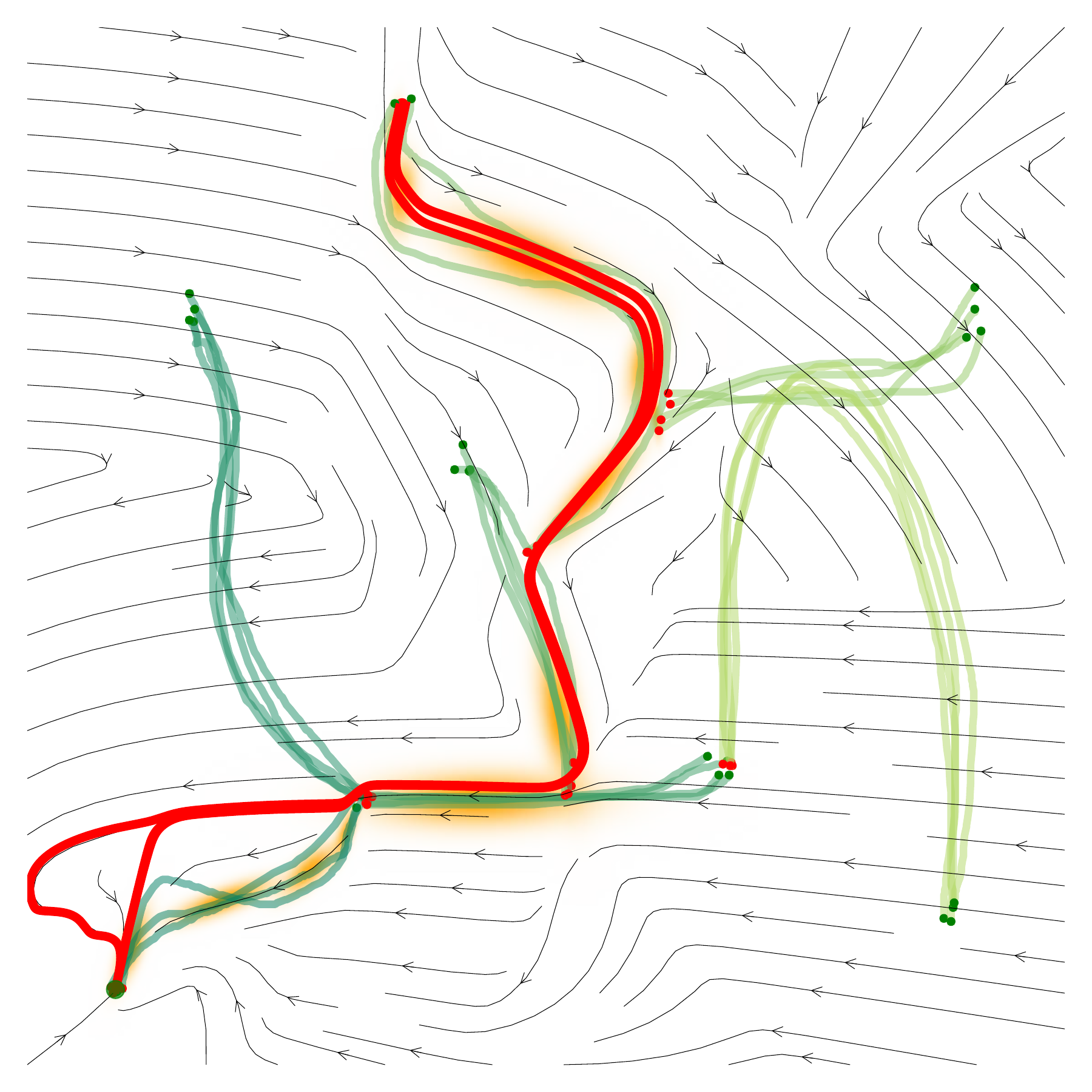}
        \caption{Stitch-SP (DS)}
        \label{fig:}
    \end{subfigure}
    \hfill
    \begin{subfigure}[t]{0.193\textwidth}
        \centering
        \includegraphics[width=\textwidth]{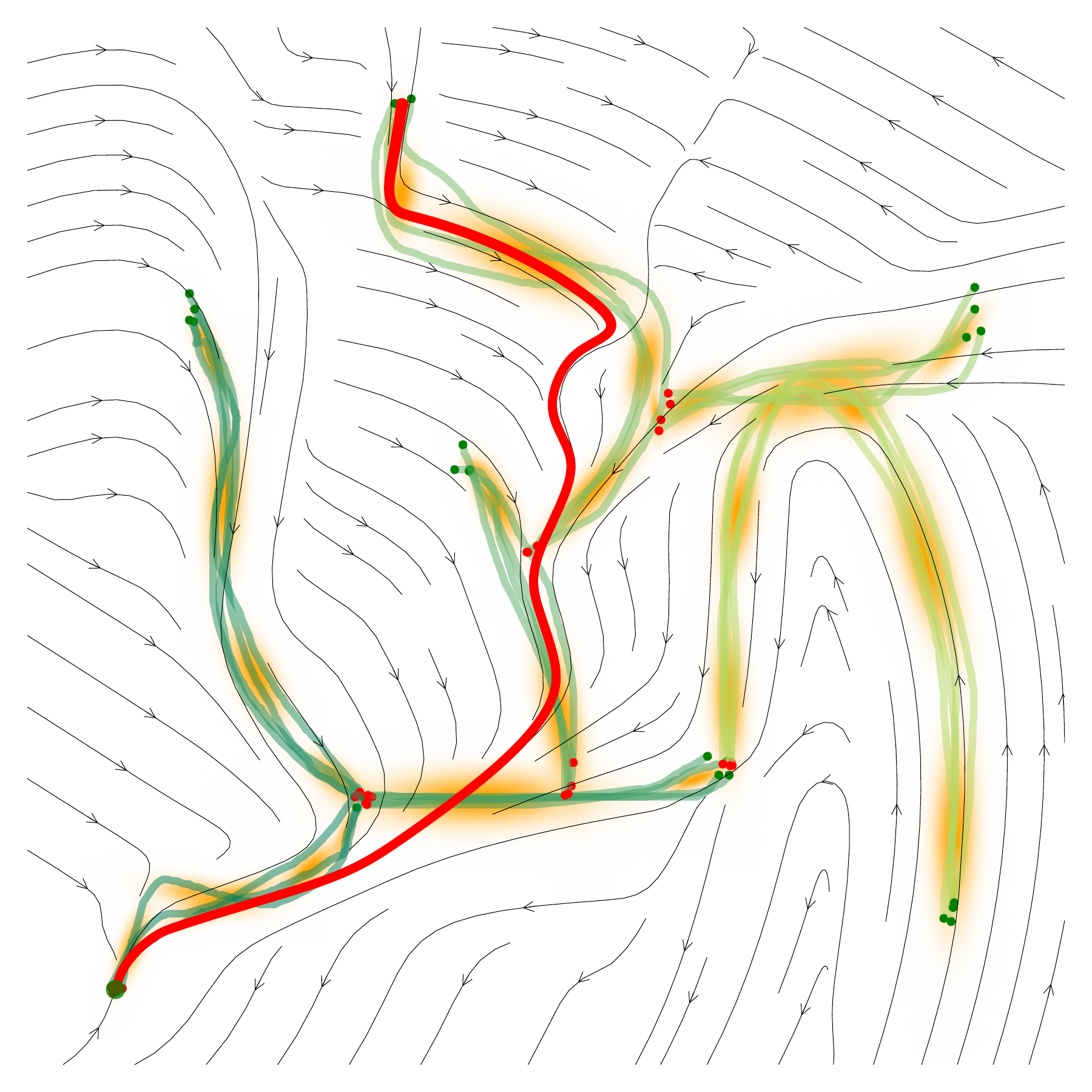}
        \caption{Stitch-SPT (All)}
        \label{fig:}
    \end{subfigure}
    \hfill
    \begin{subfigure}[t]{0.193\textwidth}
        \centering
        \includegraphics[width=\textwidth]{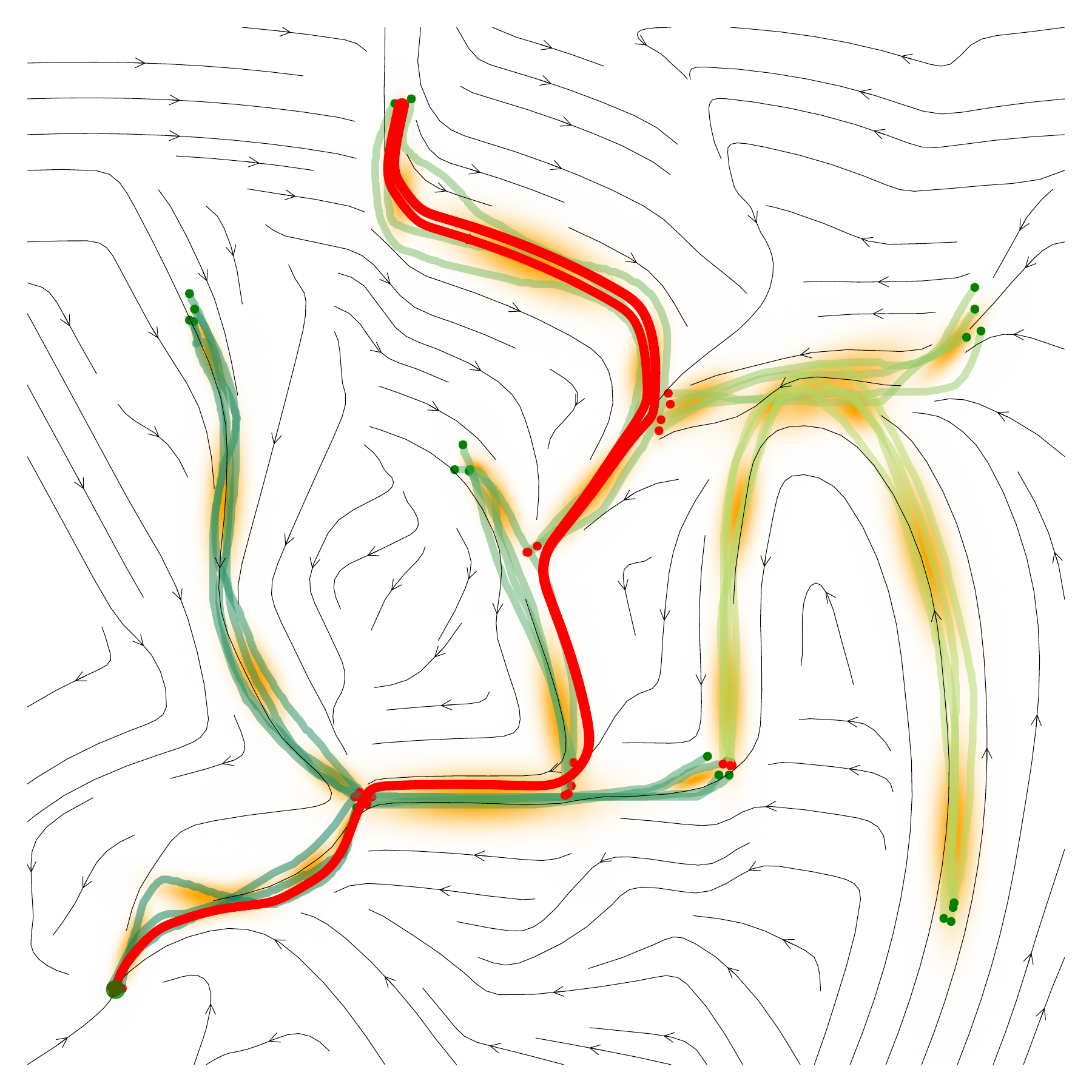}
        \caption{Stitch-SPT (DS)}
        \label{fig:methods_comparison_i}
    \end{subfigure}
    \hfill
    \begin{subfigure}[t]{0.193\textwidth}
        \centering
        \includegraphics[width=\textwidth]{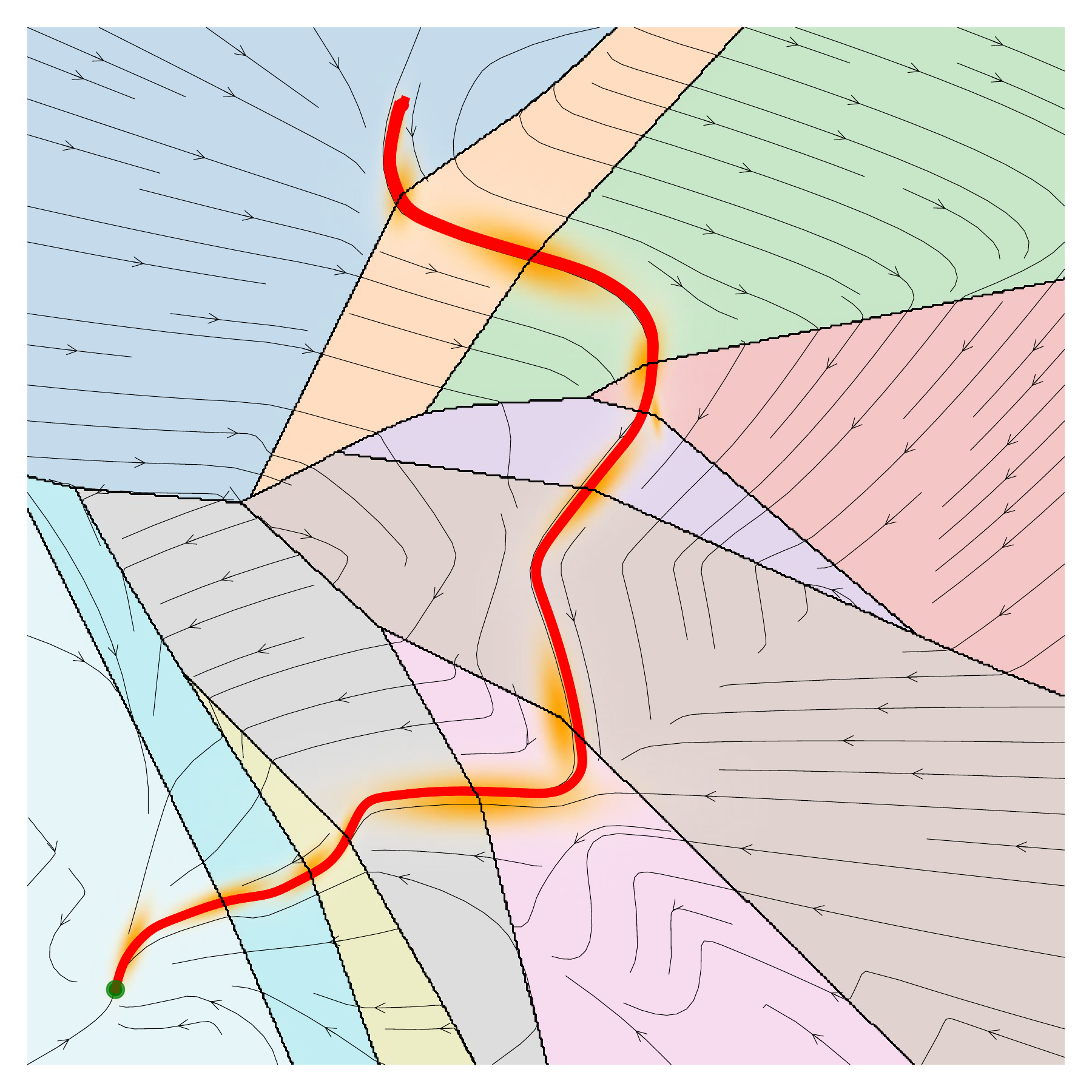}
        \caption{Chaining (All)}
        \label{fig:}
    \end{subfigure}
    \begin{subfigure}[t]{0.193\textwidth}
        \centering
        \includegraphics[width=\textwidth]{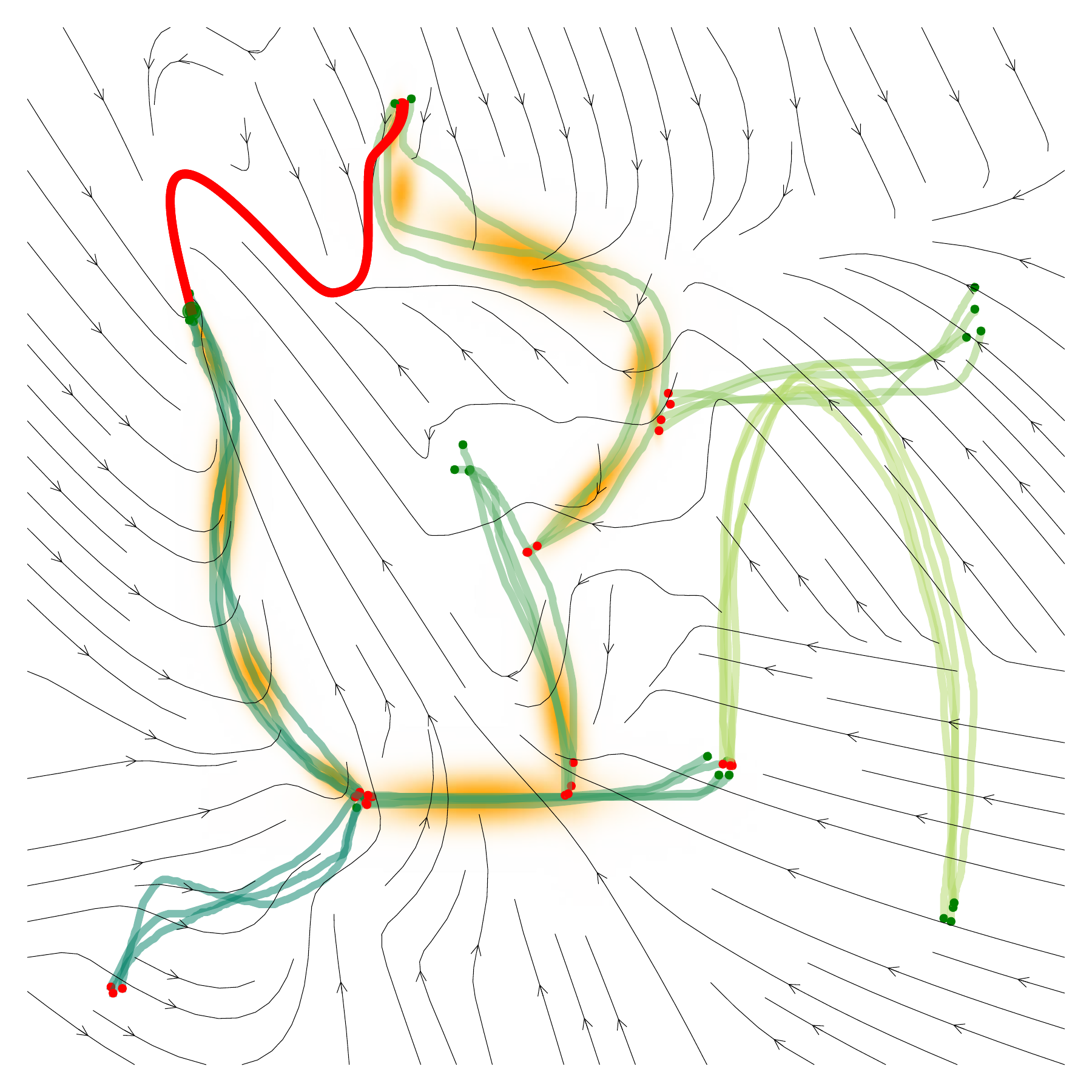}
        \caption{Stitch-SP (All)}
        \label{fig:methods_comparison_k}
    \end{subfigure}
    \hfill
    \begin{subfigure}[t]{0.193\textwidth}
        \centering
        \includegraphics[width=\textwidth]{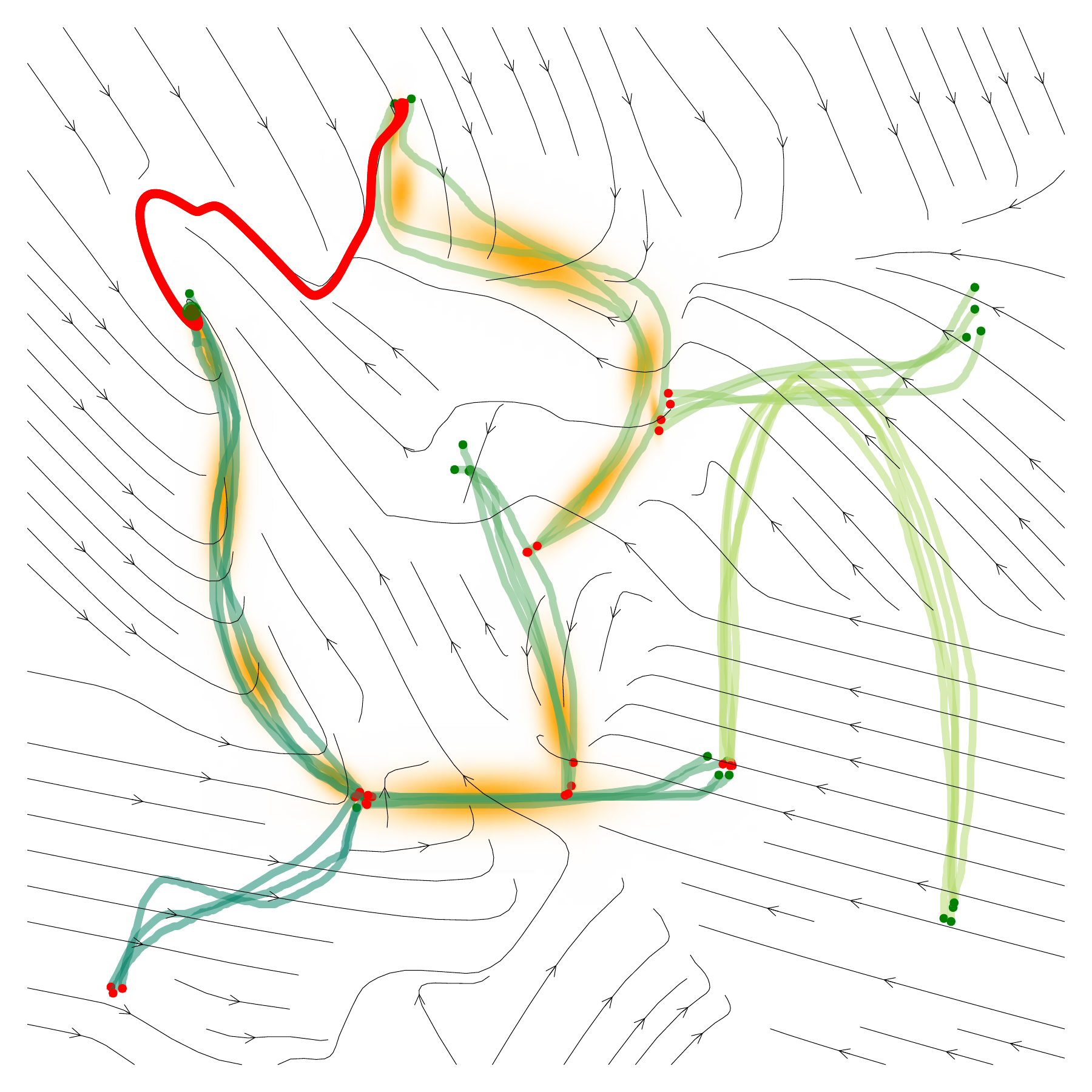}
        \caption{Stitch-SP (DS)}
        \label{fig:}
    \end{subfigure}
    \hfill
    \begin{subfigure}[t]{0.193\textwidth}
        \centering
        \includegraphics[width=\textwidth]{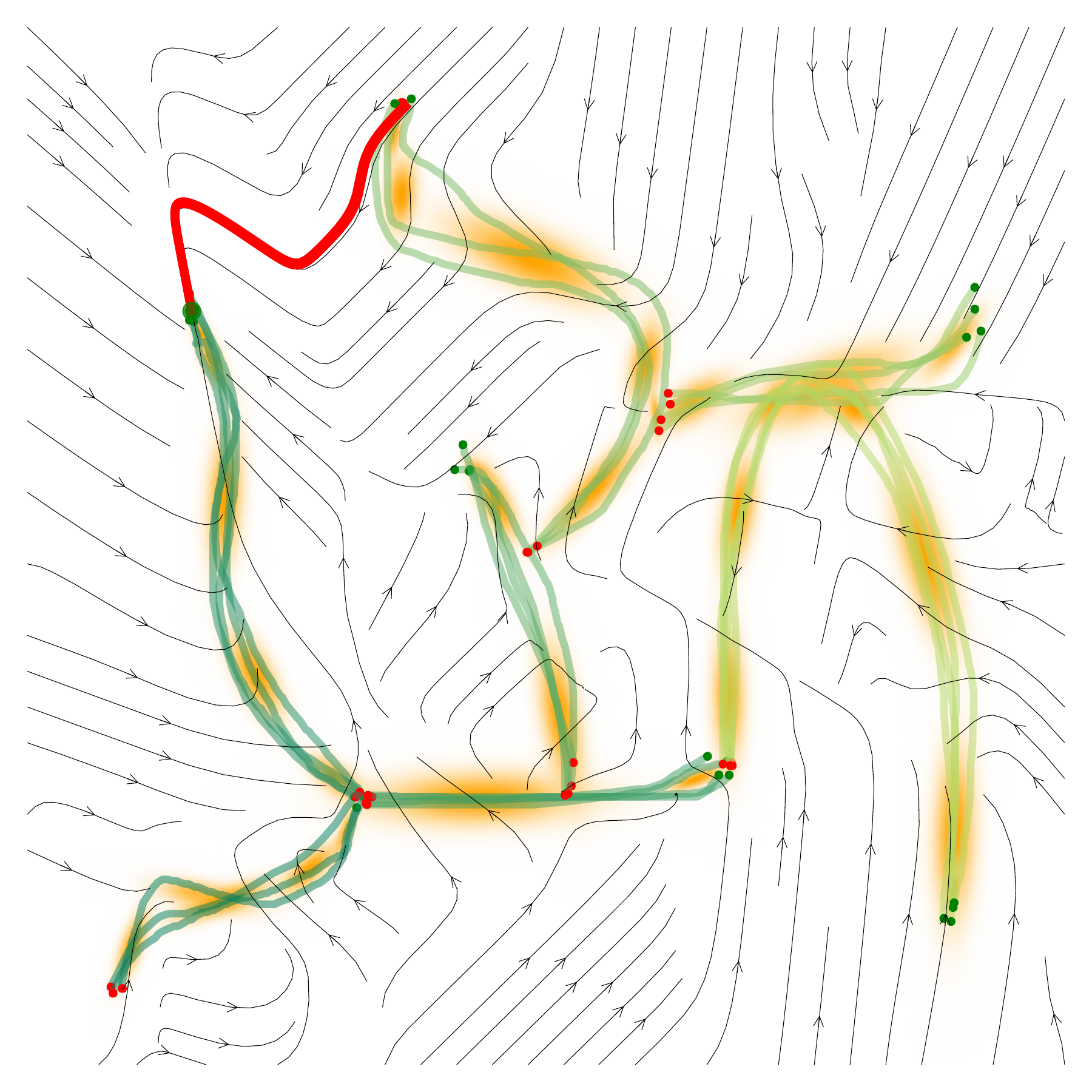}
        \caption{Stitch-SPT (All)}
        \label{fig:}
    \end{subfigure}
    \hfill
    \begin{subfigure}[t]{0.193\textwidth}
        \centering
        \includegraphics[width=\textwidth]{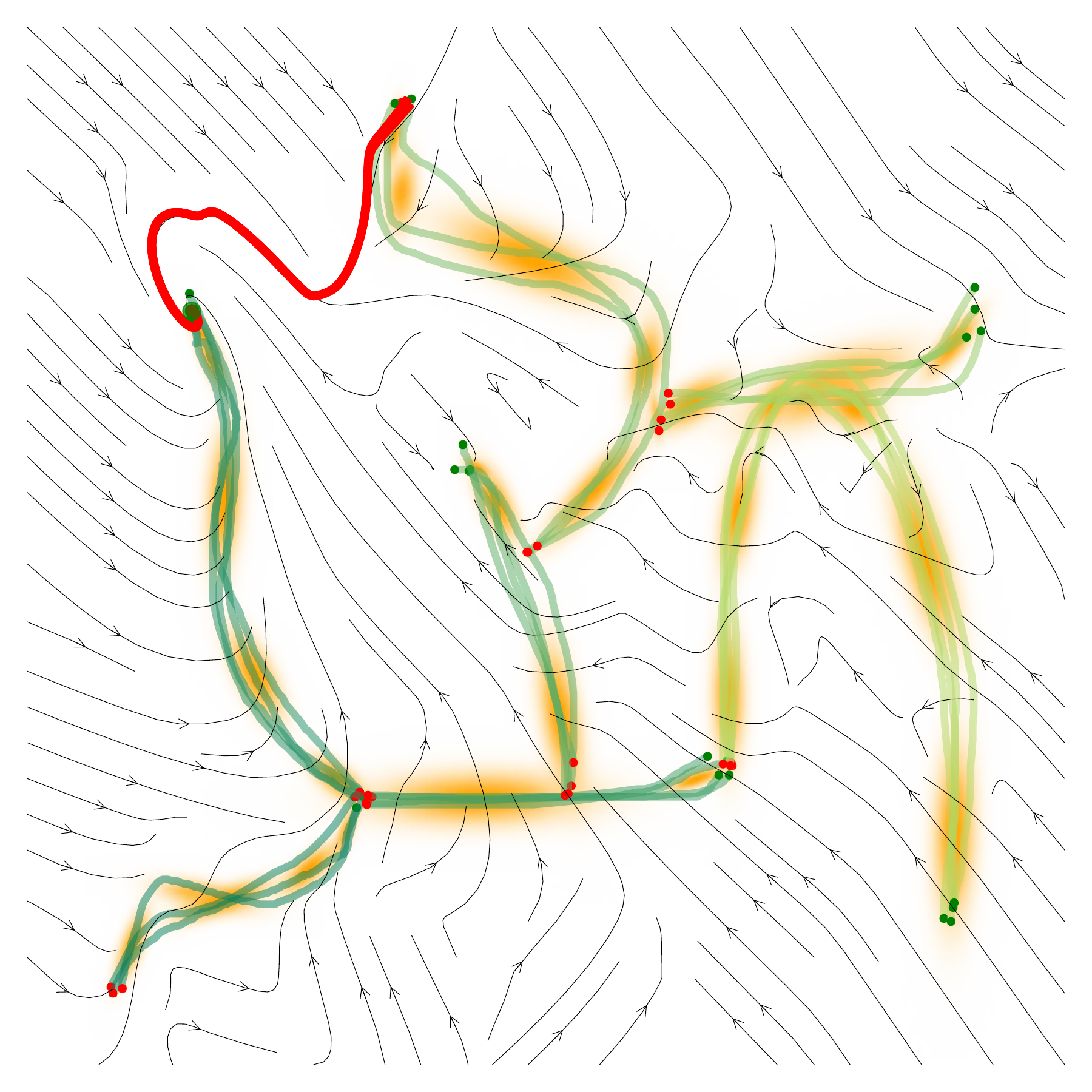}
        \caption{Stitch-SPT (DS)}
        \label{fig:}
    \end{subfigure}
    \hfill
    \begin{subfigure}[t]{0.193\textwidth}
        \centering
        \includegraphics[width=\textwidth]{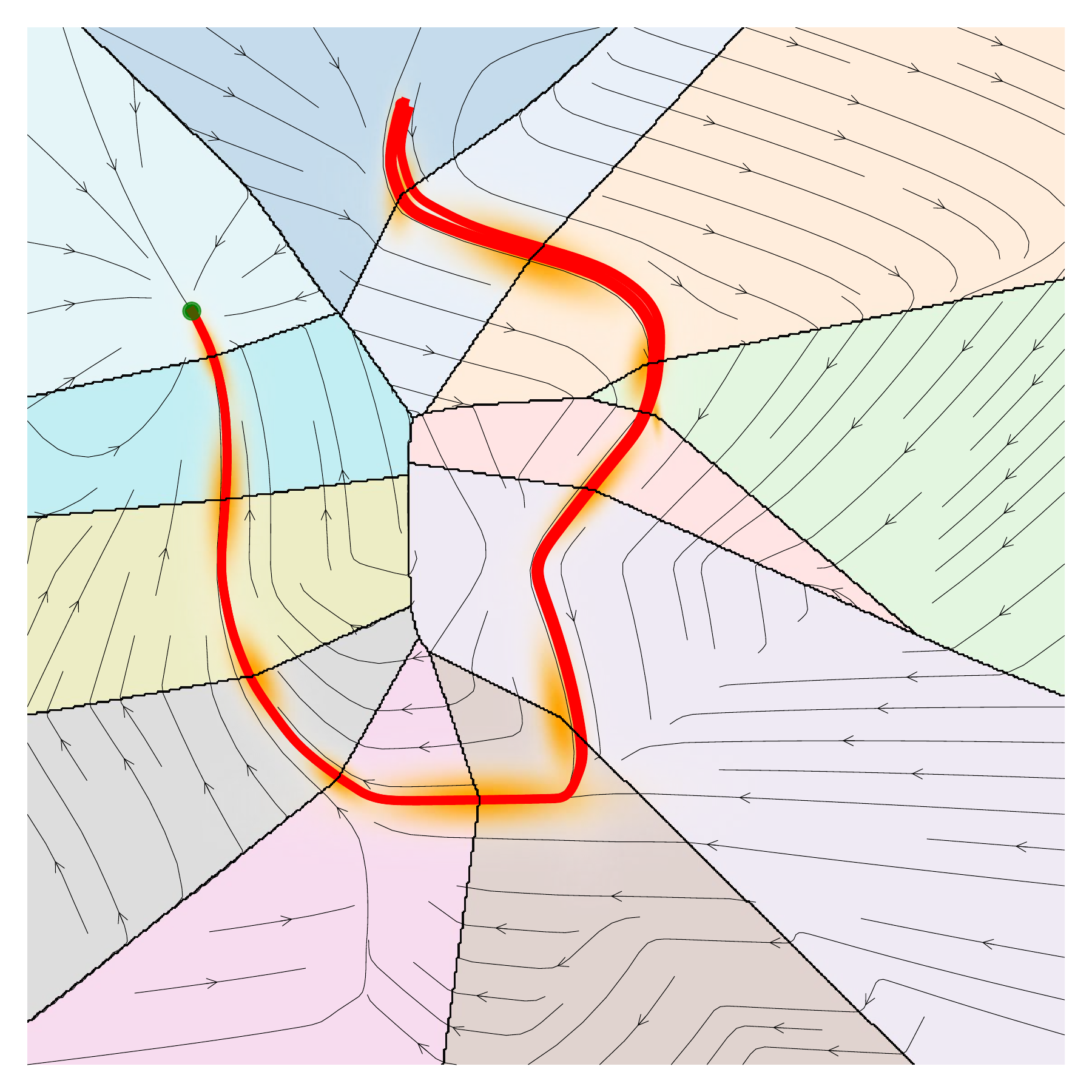}
        \caption{Chaining (All)}
        \label{fig:methods_comparison_o}
    \end{subfigure}
    \caption{Qualitative results for datasets \datasetSmall (first row) and \datasetLarge (second and third row). Simulations are shown in red, demonstrations in blue-green, and Gaussians extracted from the GG in orange. Baseline methods are excluded as they failed in all the presented cases, as well as Chaining (DS) since results are similar to Chaining (All).}
    \label{fig:methods_comparison}
    \vspace{-3mm}
\end{figure*}

\subsection{Constructing a GAS DS-Chain using the Gaussian Graph}

We now define our specific method of creating a DS-Chain using the GG, showing that it satisfies every criterion of Theorem~\ref{theorem:DSC_GAS} and is therefore GAS.

\subsubsection{Dynamical systems in $\DSseq$}

From the Gaussian graph $\Graph$, a sequence of vertices $\ggSolution = \langle v_1, v_2, \dots, v_{M} \rangle$ can be obtained by, for instance, finding the shortest path from an initial position $\pos_0$ to $\pos^*$. Each pair $(v_i,v_{i+1})$ is an edge in $\Edges$.
Just as in Section~\ref{sec:shortest_path}, we do not include the vertices $v_0$ and $v^*$ corresponding to $\pos_0$ to $\pos^*$ in $\ggSolution$.

Our approach is to partition $\ggSolution$ using a sliding window to fit a local DS to each segment. 
While the simplest approach uses vertex pairs (i.e., single edges), we achieve better results using overlapping vertex triplets.
Triples are advantageous because adjacent segments share exactly one edge, providing a continuous transition region between successive local models.
This yields a sequence of $M-2$ overlapping segments: 
$\langle \langle v_1, v_2, v_3 \rangle, \langle v_2, v_3, v_4 \rangle, \dots, \langle v_{M-2}, v_{M-1}, v_M \rangle \rangle$.
We make one assumption on the path $\ggSolution$: no segment derived from $\ggSolution$ contains any vertex \emph{and} its reversal (under the bi-directionality assumption) due to the difficulty of finding a DS from the resulting set of data points.

LPV-DS is applied to each segment $\langle v_i, v_{i+1}, v_{i+2} \rangle$ to obtain an LTI DS $f_i$ that is GAS at the final vertex's position, $\gmean_{i+2}$.
While it seems intuitive to fit $f_i$ to all data points clustered to these three vertices, 
we find that LPV-DS more reliably obtains a DS when omitting
a specific subset of data around the target vertex $v_{i+2}$.
Specifically, let $l = ||\gmean_{i+2} - \gmean_{i+1}||_2$ be the distance between the last two vertices. 
To improve fitting, we discard any reference point $\pos^\mathit{ref}$ belonging to $v_{i+2}$ that is both farther from $v_{i+1}$ than $l$ (i.e., $||\pos^\mathit{ref} - \gmean_{i+1}||_2 > l$)
and closer to $v_{i+2}$ than $0.1l$ (i.e., $||\pos^\mathit{ref} - \gmean_{i+2}||_2 < 0.1l$).
Observe that $f_i$ is invariant to $\pos_0$ and $\pos^*$ and can therefore be pre-computed and stored in a lookup table (bounded by $|\Edges| \max_{v\in\Nodes}|\text{neighbors}(v)|$ entries).

A final DS $f^*$ is fitted to data from the last two vertices $v_{M-1}$ and $v_{M}$, and ensured GAS with respect to the attractor $\pos^*$.
This results in $\DSseq = \langle f_1,\dots,f_{M-2}, f^* \rangle$.
As an optional step, an initial DS $f_0$ --- GAS with respect to $\gmean_1$ and fitted to data from $v_1$ --- can be prepended to $\DSseq$.
The same levels of reuse as described in Section~\ref{sec:stitch} also apply here.


\subsubsection{Triggers in $\TriggerSeq$}
We define a trigger $\trigger_i(\pos)$ (associated with a segment $\langle v_i, v_{i+1}, v_{i+2}\rangle$) as $\top$ when $\pos$ has passed the second vertex $v_{i+1}$ on its way from $v_i$ to $v_{i+2}$. That is, $\trigger_i(\pos)=\top$ when
\begin{equation}
        \frac{||\pos-\gmean_{i}||_2}{||\pos-\gmean_{i+2}||_2} \geq \frac{||\gmean_{i+1} - \gmean_{i}||_2}{||\gmean_{i+1} - \gmean_{i+2}||_2},
        \label{eq:trigger}
\end{equation} 
$\trigger_i(\pos)=\perp$ otherwise.
Specifically, for the optional initial DS $f_0$, in~\eqref{eq:trigger} we let 
$\gmean_i=\pos_0$, 
$\gmean_{i+2} = \gmean_1$, and 
$\gmean_{i+1}=(\pos_0 + \gmean_1)/2$.

\subsubsection{Timers in $\TransSeq$}

For each segment $i=1,\dots,N-1$, 
the timer $\trans_i(t)$ switches to $\top$ after prescribed transition time $T_i$.
Specifically, $\trans_i(t)=\top$ when $t-t_i \geq T_i$, $\trans_i(t)=\perp$ otherwise,
where $t$ is the current time and $t_i$ is the time at which the DS-Chain enters state $s_i$.
We choose $T_i\in\Real^+$ to approximate the time required for the state $\pos$ to travel a fraction $\alpha\in[0,1]$ from $v_{i+1}$ to $v_{i+2}$:
\begin{equation}
    T_i = \alpha \frac{|| \gmean_{i+2} - \gmean_{i+1} ||_2}{||f_i(\gmean_{i+1})+f_{i+1}(\gmean_{i+1})||_2 / 2}
\end{equation}
This can be interpreted as follows:
state $s_i$ is entered when trigger $\trigger_i=\top$,
which occurs near the middle Gaussian $i+1$ of the segment $\langle v_i, v_{i+1}, v_{i+2} \rangle$.
Hence, $T_i$ estimates the time needed to travel a fraction $\alpha$ of the distance from $v_{i+1}$ to $v_{i+2}$ under the average velocity of the current and next DS.
In particular, $\alpha=0$ yields an immediate transition, whereas $\alpha=1$ delays the transition until approximately the full segment has been traversed ($\pos\approx \gmean_{i+2}$).


\begin{table*}[]
    \centering 
    \resizebox{\textwidth}{!}{%
        \begin{tabular}{l | llll | llll | llll}
        \toprule
          & \multicolumn{4}{c|}{\textbf{2D Small}} & \multicolumn{4}{c|}{\textbf{2D Large}} & \multicolumn{4}{c}{\textbf{3D PC-GMM}} \\
         Method & Success \% & RMSE & Data Support & Comp. Time & Success \% & RMSE & Data Support & Comp. Time & Success \% & RMSE & Data Support & Comp. Time \\
        \midrule
        Baseline (All) & 20.8\% & 1.77 ± 0.72 & 0.73 ± 0.17 & 8.29 ± 0.63 & 19.1\% & 2.15 ± 0.74 & 0.72 ± 0.23 & 17.51 ± 1.86 & 28.3\% & 0.28 ± 0.33 & 0.88 ± 0.17 & 17.78 ± 16.59 \\
        Baseline (DS) & 20.0\% & 2.97 ± 1.57 & 0.68 ± 0.25 & 10.16 ± 0.42 & 13.7\% & 3.30 ± 0.76 & 0.58 ± 0.24 & 33.65 ± 2.67 & 35.8\% & 2.53 ± 1.19 & 0.93 ± 0.10 & 251 ± 23 \\
        Stitch-SP (All) & 98.3\% & \textbf{0.08 ± 0.02} & 0.86 ± 0.24 & 0.76 ± 0.42 & 90.2\% & 0.14 ± 0.30 & 0.90 ± 0.17 & 1.09 ± 1.28 & 90.0\% & \textbf{0.04 ± 0.03} & 0.87 ± 0.19 & 1.33 ± 1.27 \\
        Stitch-SP (DS) & 94.2\% & \textbf{0.08 ± 0.02} & 0.86 ± 0.24 & 0.14 ± 0.71 & 91.3\% & 0.19 ± 0.41 & 0.87 ± 0.21 & 0.59 ± 1.90 & 85.8\% & \textbf{0.04 ± 0.02} & 0.85 ± 0.20 & 0.06 ± 0.04 \\
        Stitch-SPT (All) & 97.5\% & 0.09 ± 0.03 & 0.86 ± 0.19 & 1.44 ± 0.72 & 89.7\% & 0.27 ± 0.44 & 0.77 ± 0.22 & 5.06 ± 5.95 & 81.7\% & 0.08 ± 0.11 & 0.92 ± 0.13 & 7.10 ± 12.37 \\
        Stitch-SPT (DS) & 95.8\% & \textbf{0.08 ± 0.02} & 0.85 ± 0.20 & 0.08 ± 0.02 & 91.6\% & 0.67 ± 0.82 & 0.80 ± 0.22 & 15.28 ± 16.46 & \textbf{100.0\%} & 0.45 ± 0.57 & 0.95 ± 0.06 & 196 ± 100 \\
        Chaining (All) & \textbf{99.2\%} & 0.11 ± 0.14* & \textbf{0.91 ± 0.14} & 0.26 ± 0.05 & \textbf{97.4\%} & \textbf{0.12 ± 0.18}* & \textbf{0.96 ± 0.09} & 0.29 ± 0.09 & 97.5\% & 0.05 ± 0.03* & 0.97 ± 0.06 & 0.46 ± 0.32 \\
        Chaining (DS) & 97.5\% & 0.14 ± 0.19* & \textbf{0.91 ± 0.17} & \textbf{0.01 ± 0.01} & 94.1\% & 0.15 ± 0.39* & 0.95 ± 0.09 & \textbf{0.01 ± 0.00} & \textbf{100.0\%} & 0.06 ± 0.02* & \textbf{0.98 ± 0.03} & \textbf{0.02 ± 0.00} \\
        \bottomrule
        \end{tabular}
    }
    \caption{The results on each dataset for the methods LPV-DS (Baseline), Stitching using the Shortest Path (Stitch SP) and Shortest Path Tree (Stitch SPT), and Chaining, showing the success rate in percent, RMSE, Data Support, and computation time in seconds from four random seeds. 
    The two levels of reuse from Section~\ref{sec:stitch} --- recomputing both the GMM and DS parameters (all) or only recomputing the DS parameters (DS) --- are tested. $^*$Calculating the RMSE of \emph{Chaining} methods is approximated by mixing DSs equally during transitions. 
    }
    \label{tab:results}
    \vspace{-3mm}
\end{table*}

\subsubsection{Intermediate dynamics in $\InterDSseq$}
The system remains in intermediate state $s_i'$ for $T_i$ time while transitioning from state $s_i$ with dynamics $f_i$ to state $s_{i+1}$ with dynamics $f_{i+1}$.
Therefore, we let the dynamics $\inter_i$ during this transition be a linear interpolation between the two DSs:
\begin{equation}
    \inter_i(\pos, t) = \frac{t-t_i}{T_i}f_i(\pos) + \left(1 - \frac{t-t_i}{T_i}\right)f_{i+1}(\pos).
\end{equation}
Since $f_1,\dots,f_N$ are all bounded, so too are the linear interpolations $u_1, \dots, \inter_{N-1}$.

By construction, $\DSC = \langle \DSseq, \TriggerSeq, \TransSeq, \InterDSseq \rangle$ satisfies all criteria of Theorem~\ref{theorem:DSC_GAS} and is therefore GAS with respect to $\pos^*$.
Criteria~\ref{theorem:DSC_GAS:criteria_1} and \ref{theorem:DSC_GAS:criteria_4} hold because all primary and intermediate dynamics in $\DSseq$ and $\InterDSseq$ are bounded, and the final system $f_N = f^*$ is GAS.
Furthermore, Criteria~\ref{theorem:DSC_GAS:criteria_2} and~\ref{theorem:DSC_GAS:criteria_3} are satisfied because every trigger $\trigger_i(\pos)$ evaluates to $\top$ in finite time (since each $f_i$ is GAS towards $\gmean_{i+2}$) and the transition timers $\trans_i(t)$ rely on finite $T_i$.

%% file: sections/Experiments.tex
\subsection{Simulation Experiments}
We evaluate the proposed methods alongside LPV-DS as a baseline on three datasets:
\datasetSmall(Fig.~\ref{fig:pipeline} and Figs.~\ref{fig:methods_comparison_a}--\ref{fig:methods_comparison_e});
a larger
\datasetLarge of 6 demonstrations (Figs.~\ref{fig:methods_comparison_f}--\ref{fig:methods_comparison_o}); and
\datasetPcgmm based on the dataset from~\cite{figueroa2018physically}, comprising ``3D C-shape top'', ``3D viapoint 1'', and ``3D viapoint 2''.
For each dataset, all pairwise combinations of pooled initial-attractor points (assuming bi-directionality) yield $30$ instances for \datasetSmall, $182$ for \datasetLarge, and $30$ for \datasetPcgmm. 
All experiments use $\parambhat=0.05$, $\paramdist=2$ and $\paramdir=1$, thus slightly prioritizing closeness over alignment, and are rerun using four random seeds.

Results are presented in Table~\ref{tab:results}.
The \emph{success-rate} is shown, with a success meaning that a valid solution was found and the goal state was reached within $1000$ simulation seconds.
\emph{Root Mean-Squared Error} (RMSE) is taken between $\vel=f(\pos^\mathit{ref})$ and $\vel^\mathit{ref}$.
Due to these methods aggregating reference points from different parts of various demonstrations, we found the common \emph{Dynamic Time Warping Distance} to be an unsuitable measurement of how data-supported a trajectory is.
Instead, a trajectory's \emph{Data Support} is the average evaluation of its points. 
A point with distance $d$ to the nearest reference point evaluates to $1$ if $d<\mu_\DemoSet$, else $\exp\left(-\frac{1}{2}\left(\frac{d-\mu_\DemoSet}{\sigma_\DemoSet}\right)^2\right)$, where $\mu_\DemoSet$ and $\sigma_\DemoSet$ are the mean and standard deviation of inter-trajectory nearest-neighbor distances over reference points in demonstrations in $\DemoSet$. 
Thus, a Data Support of $1.0$ indicates the trajectory to be as close to the data as the reference trajectories, while $0.0$ indicates no support.
Metrics are averaged over successful solutions. 
More details are available on our repository\footref{footnote:repo}.

As shown in Table~\ref{tab:results}, our proposed GG methods successfully synthesize new trajectories, achieving substantially higher success rates across all datasets compared to the LPV-DS baselines, which fail in the majority of cases.
Furthermore, in the instances where valid solutions are found, our approaches generally yield higher-quality trajectories. 
This is evidenced by improved alignment with reference velocities (lower RMSE) and stronger adherence to demonstrated regions (higher Data Support) compared to baseline methods.
Our proposed methods additionally yield lower inference times, with \emph{Chaining (DS)} outperforming the others by a significant margin. 
This rapid online performance is achieved by shifting much of the computational load to offline precomputations, which averages $4.2$ seconds for \emph{Chaining (DS)} and $44.3$ seconds for \emph{Chaining (All)}.

Fig.~\ref{fig:methods_comparison} illustrates qualitative examples of our proposed methods. 
Notably, the structural difference between the \emph{Stitch-SP} and \emph{Stitch-SPT} methods is visible: 
\emph{Stitch-SP} yields a DS tailored to a specific initial position, 
whereas \emph{Stitch-SPT} remains agnostic to the initial position and thereby generates a global DS toward the target. 
The bottom row (Figs.~\ref{fig:methods_comparison_k}--\ref{fig:methods_comparison_o}) highlights the particular strengths of \emph{Chaining} in complex scenarios where \emph{Stitching} fails.
Specifically, \emph{Chaining} is able to formulate and execute sophisticated trajectories that are otherwise challenging for a single time-invariant DS to represent while satisfying the strict decay constraints of a Lyapunov function. 

Overall, the simulation results confirm that both \emph{Stitching} and \emph{Chaining} effectively enable zero-shot generalization to unseen tasks with a relatively high rate of success. 

\subsection{Real-Robot Experiments}

\begin{figure*}[t]
    \centering
    \includegraphics[width=\textwidth]{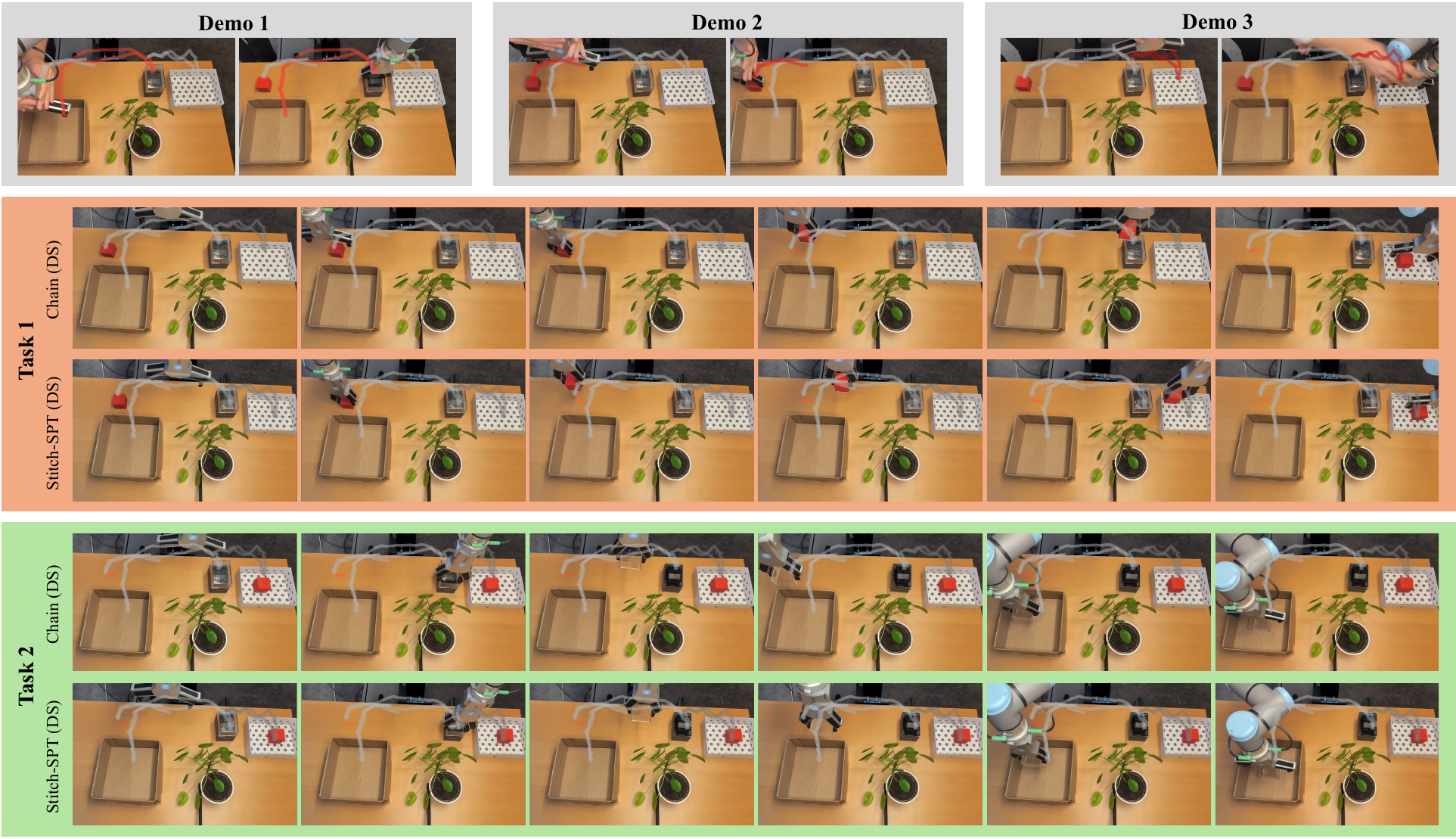}
    \caption{Real-robot experiments where we collect 3 demonstrations with 2 trajectories each and test \emph{Stitch-SPT (DS)} and \emph{Chaining (DS)} on two different tasks that require reversing and stitching. The environment consists of a transparent cube (object 1), a red cube (object 2), a brown box (box A), a white box (box B), and a plant acting as an obstacle.}
    \label{fig:real_robot}
\end{figure*}

Furthermore, we validate our simulated findings through real-robot experiments using the two methods with the highest success rate on \datasetPcgmm: \emph{Stitch-SPT (DS)} and \emph{Chaining (DS)}. The experiments are conducted on a UR3e manipulator equipped with an OnRobot two-finger gripper, using the same configurations as in the previous section.
We collect training data for three distinct demonstrations (two trajectories each): (i) grasping object 1 starting from box A, (ii) grasping object 2, and (iii) transitioning from above object 1 to box B (see first row of Fig. \ref{fig:real_robot}). Importantly, grasping solely refers to moving towards the object and closing the gripper without lifting it.

Using this dataset, we evaluate zero-shot generalization on two unseen composite tasks, both starting from the workspace center: (1) picking up object 2 and moving it to box B, and (2) picking object 1 and moving it to box A. 
In order to solve task (1), the underlying policy needs to combine all the existing demonstrations. Conversely, task (2) requires the system to reverse demonstration (i) without experiencing interference from other data. As the DS outputs end-effector velocities, we map them to joint velocities via an inverse-kinematics solver and execute them with a velocity controller. 
Gripper actions are triggered when reaching the desired pickup location.
Additionally, a complete task is defined by consecutive attractor positions.
For instance, in task (1), the attractor is initially object 2, but then changed to box B once object 2 is grasped.

Fig. \ref{fig:real_robot} illustrates the resulting behaviors for both \emph{Stitch-SPT (DS)} and \emph{Chaining (DS)}. As can be seen, both methods successfully complete the tasks, confirming the efficacy of our methods as observed in the simulation datasets. Crucially, integrating trajectory stitching (or chaining) combined with the bi-directionality assumption enables the end-effector to freely move around the whole workspace while remaining within the demonstrated data manifold, allowing it to closely follow the most fitting demonstration at any given time.

%% file: sections/Discussions.tex
\subsection{Methodological Trade-Offs}

Each of the proposed methods offers distinct advantages while presenting specific trade-offs.
\emph{Stitch-SP} constructs a DS tailored to a specific initial-target pair; 
it generally achieves trajectory qualities comparable to \emph{Stitch-SPT}, but requires less computation time.
On the other hand, \emph{Stitch-SPT} generates a global DS dependent exclusively on the target position, resulting in a policy that generally accommodates any initial position within data support and thereby making it inherently more robust to large perturbations. 
Furthermore, it can be entirely precomputed if the set of desired target positions is known in advance.

\emph{Chaining} yields the highest-quality trajectories and extremely fast computation times, provided the offline precomputation is completed.
Crucially, \emph{Chaining} is the only method among those evaluated capable of producing highly sophisticated, intersecting trajectories (such as those in Fig.~\ref{fig:chaining_example} and Figs.~\ref{fig:methods_comparison_k}--\ref{fig:methods_comparison_o}) that exceed the representational capacity of a single time-invariant DS. 
However, because \emph{Chaining} formulates an inherently time-varying DS, it relies on the implicit assumption that the system state remains relatively close to the active graph segment.
This results in a higher sensitivity to large perturbations compared to the \emph{Stitching} approaches.
Ultimately, the optimal choice of method is application-dependent, balancing the need for robustness, computational performance, and trajectory complexity.

\subsection{Incremental Learning and Scalability}

One of the most significant advantages of the proposed methods is their scalability to unseen tasks within a shared workspace.
Rather than collecting a new demonstration for every possible task, existing demonstrations can be reused to synthesize novel trajectories.
If an entirely new region of the workspace must be reached, only a brief, localized demonstration connecting the new area to the existing spatial network is required.
Thus, our methods integrate naturally with incremental learning. 
Incorporating a new demonstration requires fitting an LPV-DS model to the new demonstration, translating it into new vertices and edges, and appending them to the existing GG.
Once updated, \emph{Stitching} or \emph{Chaining} can be applied. 
Notably, prior \emph{Chaining} precomputations can be retained;
the system only needs to compute and store the localized dynamics of newly added segments.
This flexibility is key for deployment in real-life settings where adjustability and extendibility are critical.

\subsection{Limitations and Future Work}

While the proposed framework successfully enables zero-shot generalization, it presents a few limitations --- beyond those already discussed --- that motivate future work.
First, the construction of the GG currently relies on user-defined hyperparameters ($\parambhat, \paramdist, \paramdir$) to balance data-overlap, spatial distance, and dynamic consistency. 
Besides exploring the wide range of alternative edge-weight formulations, future work could aim to automate the parameter selection process.
This could be achieved by framing the graph connectivity as an optimization problem, or by learning acceptable parameter bounds directly from the demonstration data distribution.

Second, our approach aims to synthesize novel trajectories within data support. 
However, it does not explicitly account for changing object positions or obstacles introduced into the workspace after the demonstrations were collected. 
Integrating real-time obstacle avoidance --- for instance, using DS \emph{modulation}~\cite{DSmodulation} or \emph{Control Barrier Functions}~\cite{CBF} --- could be implemented both during the GG path-finding phase and within the resulting control policy. Similarly, combining our method with, for instance, \emph{Elastic Motion Policies}~\cite{li2025elastic} would allow us to adapt to different object positions.

Finally, the current implementation operates primarily on spatial components (Euclidean space). 
A highly promising avenue for future work is extending the approach to include end-effector orientations by integrating SE(3) LPV-DS~\cite{sun2024se}. 
The primary modification required would be adapting the edge-weight metrics used during the graph search to account for rotational manifolds.
This extension highlights the underlying generality of the Gaussian Graph framework and provides a strong theoretical foundation for scaling to more complex robotics tasks.

%% file: sections/Conclusion.tex
Although there are multiple ways to learn motion policies from expert demonstrations, dynamical system (DS) based methods have proven to be data-efficient and provide fast and reactive control with stability guarantees.
However, these methods often lack the ability to combine learned skills to generalize to new scenarios. 
In this work, we presented a framework to effectively 
perform zero-shot generalization from motion demonstrations of separate tasks in a shared workspace to new, undemonstrated tasks.
To this end, we introduced the \emph{Gaussian Graph} (GG), constructed from the components obtained by applying Linear Parameter-Varying DS (LPV-DS) to each demonstration separately. 
The GG thereby allows for the unification of continuous control with well-known graph algorithms.
We then introduced two classes of methods:
\emph{Demonstration Stitching}, which combines GG components along a shortest path or in a shortest path tree to produce a time-invariant DS; and
\emph{Demonstration Chaining}, which divides a path in the GG into ``stepping stones'' that are sequentially traversed.
The Stitching methods naturally inherit LPV-DS's stability guarantees while reliably generalizing to new tasks, 
while for Chaining we provide formal proofs of stability and showcase the flexibility it provides in the resulting types of motions. 
Simulations and real-robot evaluations show their efficacy for zero-shot generalization where baselines fail.
Overall, this work contributes towards developing methods that are sample-efficient, reactive, and provably convergent while effectively reusing data for unseen tasks.

%% file: Bib.bib
@inproceedings{figueroa2018physically,
  title={A Physically-Consistent Bayesian Non-Parametric Mixture Model for Dynamical System Learning},
  author={Figueroa, Nadia and Billard, Aude},
  booktitle={Conference on Robot Learning},
  pages={927--946},
  year={2018},
  organization={PMLR}
}

@article{kawaharazuka2024real,
  title={Real-world robot applications of foundation models: A review},
  author={Kawaharazuka, Kento and Matsushima, Tatsuya and Gambardella, Andrew and Guo, Jiaxian and Paxton, Chris and Zeng, Andy},
  journal={Advanced Robotics},
  volume={38},
  number={18},
  pages={1232--1254},
  year={2024},
  publisher={Taylor \& Francis}
}

@book{billard2022learning,
  title={Learning for Adaptive and Reactive Robot Control: A Dynamical Systems Approach},
  author={Billard, Aude and Mirrazavi, Sina and Figueroa, Nadia},
  year={2022},
  publisher={Mit Press}
}

@article{kim2024openvla,
  title={{OpenVLA}: An Open-Source Vision-Language-Action Model},
  author={Kim, Moo Jin and Pertsch, Karl and Karamcheti, Siddharth and Xiao, Ted and Balakrishna, Ashwin and Nair, Suraj and Rafailov, Rafael and Foster, Ethan and Lam, Grace and Sanketi, Pannag and others},
  journal={arXiv preprint arXiv:2406.09246},
  year={2024}
}

@inproceedings{zitkovich2023rt,
  title={{RT-2}: Vision-Language-Action Models Transfer Web Knowledge to Robotic Control},
  author={Zitkovich, Brianna and Yu, Tianhe and Xu, Sichun and Xu, Peng and Xiao, Ted and Xia, Fei and Wu, Jialin and Wohlhart, Paul and Welker, Stefan and Wahid, Ayzaan and others},
  booktitle={Conference on Robot Learning},
  pages={2165--2183},
  year={2023},
  organization={PMLR}
}

@inproceedings{sun2024se,
  title={{SE(3)} Linear Parameter Varying Dynamical Systems for Globally Asymptotically Stable End-Effector Control},
  author={Sun, Sunan and Figueroa, Nadia},
  booktitle={2024 IEEE/RSJ International Conference on Intelligent Robots and Systems (IROS)},
  pages={5152--5159},
  year={2024},
  organization={IEEE}
}

@incollection{dijkstra,
  title={A Note on Two Problems in Connexion with Graphs},
  author={Dijkstra, Edsger W},
  booktitle={Edsger Wybe Dijkstra: his life, work, and legacy},
  pages={287--290},
  year={2022}
}

@inproceedings{medina2017learning,
  title={Learning Stable Task Sequences from Demonstration with Linear Parameter Varying Systems and Hidden Markov Models},
  author={Medina, Jose R and Billard, Aude},
  booktitle={Conference on Robot Learning},
  pages={175--184},
  year={2017},
  organization={PMLR}
}

@INPROCEEDINGS{li2025elastic,
  author={Li, Tianyu and Sun, Sunan and Aditya, Shubhodeep Shiv and Figueroa, Nadia},
  booktitle={2025 IEEE/RSJ International Conference on Intelligent Robots and Systems (IROS)}, 
  title={Elastic Motion Policy: An Adaptive Dynamical System for Robust and Efficient One-Shot Imitation Learning}, 
  year={2025},
  volume={},
  number={},
  pages={9846-9853},
  doi={10.1109/IROS60139.2025.11246142}
  }

@article{bhattacharyya1943measure,
  title={On a measure of divergence between two statistical populations defined by their probability distribution},
  author={Bhattacharyya, Anil},
  journal={Bulletin of the Calcutta Mathematical Society},
  volume={35},
  pages={99--110},
  year={1943}
}

@Article{Calinon2016,
author={Calinon, Sylvain},
title={A tutorial on task-parameterized movement learning and retrieval},
journal={Intelligent Service Robotics},
year={2016},
month={Jan},
day={01},
volume={9},
number={1},
pages={1-29},
issn={1861-2784},
doi={10.1007/s11370-015-0187-9},
url={https://doi.org/10.1007/s11370-015-0187-9}
}

@INPROCEEDINGS{CBF,
  author={Ames, Aaron D. and Coogan, Samuel and Egerstedt, Magnus and Notomista, Gennaro and Sreenath, Koushil and Tabuada, Paulo},
  booktitle={2019 18th European Control Conference (ECC)}, 
  title={Control Barrier Functions: Theory and Applications}, 
  year={2019},
  volume={},
  number={},
  pages={3420-3431},
  doi={10.23919/ECC.2019.8796030}}

@Article{DSmodulation,
    author={Khansari-Zadeh, Seyed Mohammad
    and Billard, Aude},
    title={A dynamical system approach to realtime obstacle avoidance},
    journal={Autonomous Robots},
    year={2012},
    month={May},
    day={01},
    volume={32},
    number={4},
    pages={433-454},
    issn={1573-7527},
    doi={10.1007/s10514-012-9287-y},
    url={https://doi.org/10.1007/s10514-012-9287-y}
}
